\newtheorem{theorem}{Theorem}[section]
\newtheorem{lemma}{Lemma}[section]
\newtheorem{cor}{Corollary}[section]
\newtheorem{define}{Definition}[section]
\newtheorem{remark}{Remark}[section]
\def\I{{\mathbb{I}}}
\def\R{{\mathbb{R}}}
\def\FB{{\mathbb{F}}}
\def\N{{\mathbb{N}}}
\def\F{{\mathcal{F}}}
\def\U{{\mathcal{U}}}
\def\H{{\mathcal{H}}}
\def\Relu{{\hbox{\rm{Relu}}}}
\def\sign{{\hbox{\rm{sign}}}}
\def\diag{{\hbox{\rm{diag}}}}
\def\Proj{{\hbox{\rm{Proj}}}}
\def\CE{{\hbox{\scriptsize\rm{CE}}}}
\def\AT{{\hbox{\scriptsize\rm{AT}}}}
\def\d{{\hbox{\rm{d}}}}
\begin{document}


\title{Adversarial Parameter Attack on Deep Neural Networks\thanks{This work is partially supported by NSFC grant No.11688101 and  NKRDP grant No.2018YFA0704705.}}
\author{Lijia Yu, Yihan Wang, and Xiao-Shan Gao\\
 Academy of Mathematics and Systems Science,  Chinese Academy of Sciences, \\ Beijing 100190,  China\\
 University of  Chinese Academy of Sciences,  Beijing 100049,  China}
\date{\today}
\maketitle

\begin{abstract}
\noindent
In this paper, a new parameter perturbation attack  on DNNs, called adversarial parameter attack,  is proposed, in which small perturbations to the parameters of the DNN
are made such that the accuracy of the attacked DNN does not decrease much,
but its robustness becomes much lower.
The adversarial parameter attack is stronger than previous parameter perturbation attacks
in that the attack is more difficult to be recognized by users
and the attacked DNN gives a wrong label for any modified sample input with high probability.
The existence of adversarial parameters is proved.
For a DNN $\F_{\Theta}$ with the parameter set $\Theta$ satisfying certain conditions,
it is shown that if the depth of the DNN is sufficiently large,
then there exists an adversarial parameter set $\Theta_a$ for $\Theta$ such that
the accuracy of $\F_{\Theta_a}$ is equal to that of $\F_{\Theta}$,
but the robustness measure of $\F_{\Theta_a}$ is smaller than any given bound.
An effective training algorithm is given to compute
adversarial parameters and numerical experiments are used to demonstrate that
the algorithms are effective to produce high quality adversarial parameters.

\vskip10pt\noindent
{\bf Keyword}.
Adversarial parameter attack, adversarial samples, robustness measurement,
adversarial accuracy, mathematical theory for safe DNN.
\end{abstract}

\section{Introduction}

The deep neural network (DNN) \cite{lecun2015deep} has become the most powerful machine learning method, which has been successfully applied in computer vision, natural language processing, and many other fields.
Safety is a key desired feature of DNNs, which was studied extensively~\cite{sur1,sur2,rubust-rev1}.
%

The most  widely studied safety issue for DNNs is the
adversarial sample attack~\cite{S2013}, that is, it is possible to intentionally make small modifications to a sample,  which are essentially imperceptible to the human eye,
but the DNN outputs a wrong label or even any label given by the adversary.
Existence of adversary samples makes the DNN vulnerable in safety-critical applications
and many effective methods were proposed to develop more robust DNNs against adversarial attacks~\cite{M2017,sur1,sur2,rubust-rev1}.
However, it was shown that adversaries samples are inevitable for current DNN models in certain sense~\cite{Bast1,asulay1,adv-inev1}.
%

More recently, the parameter perturbation attacks~\cite{pp1,pp2,pp3,aq1,aq2,aq3,stealth1,stealth2}
were studied and shown to be another serious safety treat to DNNs.
It was shown that by making small parameter perturbations,
the attacked DNN can give wrong or desired labels to
specified input samples and still give the correct labels to other samples~\cite{pp1,pp2,stealth1,stealth2}.

In this paper, the adversarial parameter attack  is proposed,
in which small perturbations to the parameters of a DNN are made
such that the attack to the DNN is essentially imperceptible to the user,
but the robustness of the DNN becomes much lower.
The adversarial parameter attack is stronger than previous parameter perturbation attacks
in that not only the accuracy but also the robustness of DNNs are considered.

\subsection{Contributions}

Let $\F_\Theta$ be a DNN with $\Theta$ as the parameter set.
A parameter perturbation $\Theta_a$  is called a set of {\em adversarial parameters} of $F_\Theta$ or $\Theta$,
if the following conditions are satisfied
1) $\Theta_a$ is a small modification of $\Theta$, for instance
$||\Theta_a-\Theta||_\infty\le \epsilon$ for a small positive number $\epsilon$;
2) the accuracy of $\F_{\Theta_a}$ over a distribution of samples
is almost the same as that of $\F_{\Theta}$;
3) $\F_{\Theta_a}$ is much less robust than $\F_{\Theta}$, that is,
$\F_{\Theta_a}$ has much more adversarial samples than $\F_{\Theta}$.
It is clear that conditions 1) and 2) are to make the attack difficult to be recognized by the users and condition 3) is to make the new DNN less safe.
The DNN obtained by the above attack is called  an {\em adversarial DNN},
which has high accuracy but low robustness.


The existence of adversarial parameters is proved under certain assumptions.
It is shown that if the depth of a trained DNN $\F_\Theta$ is sufficiently large,
then there exist adversarial parameters $\Theta_a$ such that
the accuracy of $\F_{\Theta_a}$ is equal to that of $\F_{\Theta}$,
but the robustness measure of $\F_{\Theta_a}$ is as small as possible
(refer to Corollaries \ref{cor-32} and \ref{cor-41}).
Since $\F_{\Theta}$ is a continuous function in $\Theta$,
if $\Theta_a$ is an adversarial parameter for $\Theta$
then there exists a small sphere $S_a$ with $\Theta_a$ as center such that
all parameters in $S_a$ are also adversarial parameters for $\Theta$.
%
These results imply that adversarial parameters are inevitable in certain sense,
similar to adversarial samples~\cite{Bast1,asulay1,adv-inev1}.

The existence of adversarial samples is usually demonstrated
with numerical experiments, besides a few cases to be mention in the next section.
As an application of adversarial parameters, we can construct DNNs
which are guaranteed to have adversarial samples.
For a trained DNN $\F_\Theta$ satisfying certain conditions,
it is shown that
there exist adversarial parameters $\Theta_a$ such that
the accuracy of $\F_{\Theta_a}$ is equal to that of $\F_{\Theta}$,
but $\F_{\Theta_a}$ has adversarial samples near a given normal sample
(refer to Theorem \ref{th-1}),
or the probability for $\F_{\Theta_a}$ to have adversarial samples over a distribution of samples is at least $1/2$  (refer to Theorem \ref{th-2}).
%

Finally, an effective training algorithm is given to compute
adversarial parameters and numerical experiments are used to demonstrate that
the algorithms are effective to produce high quality adversarial parameters
for the networks VGG19 and Resnet56 on the CIFAR-10 dataset.


\subsection{Related work}

There exist vast literatures on adversarial attacks, which
can be found in the survey papers~\cite{sur1,sur2,rubust-rev1}.
We will focus on those  which are closely related to our work.

{\bf Parameter perturbation attacks.}
Parameter perturbation attacks were given under different names such as
fault injection attack, fault sneaking attack, stealth attack, and weight corruption.
The fault injection attack~\cite{pp1} was first proposed by Liu et al,
where it was shown that parameter perturbations can be used
to misclassify one given input sample.
In \cite{pp3}, it was shown that laser injection techniques can be used as
a successful fault injection attack in real-world applications.
In \cite{pp2}, the fault sneaking attack was proposed,
where multiple input samples were misclassified and
other samples were still given the correct label.
%
In \cite{aq3}, lower bounds were given for parameter perturbations under which the network still gives the correct label for a given sample.
In \cite{aq1}, upper bounds were given for the changes of the pairwise class margin function
and the Rademacher complexity against parameter perturbations
and new loss functions were given to obtain more robust networks.
In \cite{aq2}, the maximum change of the loss function over given samples was used
as an indicator to measure the robustness of DNNs against parameter perturbations
and gradient decent methods were used to compute the indicator.
In \cite{stealth1,stealth2}, the stealth attack which can guaranteed
to make the attacked DNN gives a desired label for a sample outside of the validation set
and keep correct labels for samples in the validation set.
The stealth attack has the form $\F+\U$,
where $\F$ is the DNN to be attacked and $\U$ is a DNN with one hidden layer.

The adversarial parameter attack proposed in this paper
is stronger than previous parameter perturbation attacks
by in the following aspects.
First, by keeping the accuracy and eliminating the robustness,
the adversarial parameter attack is more difficult to be recognized,
because the attached DNN performs almost the same as the original DNN on the
test set.
Second, by reducing the robustness of the DNN,
the attacked DNN gives a wrong label for any modified input sample with high probability,
while previous parameter attacks usually misclassify certain given samples.
Finally, we prove the existence of adversarial parameters under reasonable assumptions.

{\bf Mathematical theories of adversarial samples.}
Existence of adversarial samples were usually demonstrated with numerical experiments, and mathematical theories were desired.
In \cite{Bast1}, it was proved that for DNNs with a fixed architecture,
there exist uncountable classification functions and distributions of samples
such that adversarial samples always exist for any successfully trained DNN
with the given architecture and the sample distribution.
In the stealth attack~\cite{stealth1,stealth2},
it was proved that there exist attached DNNs which  give a desired label for a sample outside of the validation set
by modifying the DNN.
In this paper, we show that by making small perturbations to the parameters of the DNN,
the DNN has adversarial samples with high probability.

Theories for certified robustness of DNNs were given in several aspects.
Let $x$ be a sample such that the DNN $\F$ gives the correct label.
Due to the continuity of the DNN function, a  sphere
with $x$ as center does not contain adversarial samples if its radius is sufficiently small, which is called
a {\em robust sphere} of $x$.
In \cite{rob-b3}, lower bounds for the robustness radius were computed and used to enhance
the robustness of the DNN.
In \cite{rob-b1}, for shallow networks, the upper bounds of the changes of the network
under sample input perturbations were given and use to obtain more robust DNNs.
In \cite{smooth1}, the random smoothing method was proposed and
lower bounds for the radius of the robust spheres  was given.
In \cite{netl2}, lower bounds for the average radius of robust spheres for a distribution of samples are given.
Universal lower bounds on stability in terms of the dimension of the domain of
the classification function were also given in~\cite{adv-inev1,stealth1}.
However, these bounds are usually inverse-exponentially
dependent on the depth of the DNN, which are very small
for deep networks in real world applications.
In \cite{neth1}, the information-theoretically safe bias classifier
was introduced by making the gradient of the DNN random.

{\bf Algorithms to train robust DNNs.}
Many effective methods were proposed to train more robust DNNs to defend adversarial samples~\cite{sur1,sur2,rubust-rev1,sur-adv}.
Methods to train DNNs which are more robust against parameter perturbation attacks
were also proposed~\cite{pp1,pp2}.
The adversarial training method proposed by Madry et al~\cite{M2017}
can reduce adversarial samples significantly,
where the value of the loss function of the worst adversary in a small neighborhood of
the training sample is minimized.
In this paper, the idea of adversarial training is used to
compute adversarial parameters.

\section{Adversarial parameters}
\label{sec-def}
In this section, we define the adversarial parameters and give
a measurement for the quality of the adversarial parameters.

\subsection{Adversarial parameters of DNNs}
\label{sec-def1}

Let us consider  a standard DNN.
Denote $\I=[0,  1]\subset\R$ and $[n] = \{1, \ldots, n\}$ for $n\in\N_{+}$.
In this paper,  we assume that $\F:\I^n\to \R^m$ is a
classification DNN for $m$ objects,
which has $L$ hidden-layers, all hidden-layers use $\Relu$ as the activity function,
and the output layer does not have activity functions. $\F$ can be written as
\begin{equation}
\label{eq-dnn}
\renewcommand{\arraystretch}{1.2}
\begin{array}{ll}
x_0\in\I^n,  n_0=n,  n_{L+1}=m;\\
x_{l}=\Relu(W_{l}x_{l-1}+b_{l}) \in \R^{n_{l}},  W_{l}\in \R^{n_l\times n_{l-1}},   b_{l}\in \R^{n_l},  l\in[L]; \\
 \F(x_0)=x_{L+1}=W_{L+1}x_{L}+b_{L+1},
 W_{L+1}\in \R^{m\times n_L},   b_{L+1}\in \R^{m}.\\
\end{array}
\end{equation}
Denote $\Theta=\{W_{l}, b_{l}\}_{l=1}^{L+1}\in\R^k$ to be the parameter set   of $\F$
and $\F$ is denoted as $\F_{\Theta}$ if the parameters need to be mentioned explicitly,
where $k=\sum_{l=1}^{L+1} n_l(n_{l-1}+1)$.

Let $\F_{\Theta}$ be a trained network with the parameter set $\Theta$.
Then a new parameter set $\Theta_a$ is called a set of {\em adversarial parameters} of $\Theta$
if
1) $\Theta_a$ is a small perturbation of $\Theta$;
2) the accuracy of $\F_{\Theta_a}$  is almost the same as that of $\F_{\Theta}$;
3) $\F_{\Theta_a}$ is much less robust comparing to $\F_{\Theta}$, that is,
$\F_{\Theta_a}$ has more adversarial samples than $\F_{\Theta}$.

We assume that the objects to be classified
satisfy a distribution $D_x$  in $\R^n$, and a sample $x\sim D_x$ is called a
{\em normal sample}.
Let $\Theta\in\R^k$ be the parameter set of a trained network $\F_{\Theta}:\I^n\to\R^m$.
For $x\sim D_x$, denote $l_x$ to be the label of $x$ and $\widehat{\F}_{\Theta}$ to be the classification result of $\F_{\Theta}$.
Then the accuracy of $\F_{\Theta}$ for the normal samples is
\begin{equation}
\label{eq-base}
A(\F_\Theta,D_x)=P_{x\sim D_x}(\widehat{\F}_{\Theta}(x)=l_x).
\end{equation}

In order to measure the quality of adversarial parameters, we need
a robustness measure $R(\F_{\Theta},D_x)$ of $\F_{\Theta}$ for the normal samples.
There exist several definitions for $R(\F_{\Theta},D_x)$~\cite{M2017,netl2}.
In this paper, two kinds of robustness measures are used.

We first give two robust measures of $\F_\Theta$ on a given sample $x_0$.
%
%
%
The {\em robustness radius} of $x_0$ under the $L_{p}$ norm for  $p\in\R_+\cup \{\infty\}$ is defined to be
\begin{equation}
\label{eq-rm11}
R_1(\F_{\Theta},x_0)=\max
\{\zeta\in\R_+\,|\,  \widehat{\F}(x)=l_x, \forall x \hbox{ s.t. } ||x-x_0||_p\le \zeta\}.
\end{equation}
If $\widehat{\F}_{\Theta}(x_0)\ne l_{x_0}$, then the robustness radius of $x_0$ is zero.
%
It is difficult to have good estimation to the robustness radius,
and the following approximation to the robust radius under $L_{p}$ norm~\cite{rob-b3} is often used
\begin{equation}
\label{eq-rm21}
R_2(\F_{\Theta},x_0)=\min_{l\in[m],l\ne l_x}\{\frac{|\F_{l_x}(x_0)-\F_{l}(x_0)|}{||\nabla \F_{l_x}(x_0)-\nabla \F_{l}(x_0)||_q}I(\F_{l_x}(x_0)>\F_{l}(x_0))\}
\end{equation}
where
$\F_l(x_0)$ is the $l$-th coordinate of $\F(x_0)$,
$\nabla(\F_l(x))=\frac{\nabla \F_l(t)}{\nabla t}|_{t=x}$,
$p,q\in\R_+\{\infty\}$ satisfy $1/q+1/p=1$ ($p=0(\infty)$ iff $q=\infty(0)$),
and $I(t)=1$ if $t$ it true or $I(t)=0$ otherwise.

For a distribution $D_x$ of  samples, we define two
global robust measures corresponding to $R_1$ and $R_2$.
The adversarial accuracy can be used as  $R(\F_{\Theta},D_x)$.
For   $\epsilon\in\R_+$ and $p\in\R_+\cup \{\infty\}$, the adversarial accuracy of $\F_{\Theta}$ is
\begin{equation}
\label{eq-rm12}
\begin{array}{ll}
R_3(\F_\Theta,D_x,\epsilon)&=P_{x\sim D_x}(\epsilon\le R_1(\F_\Theta,x))\\
&=P_{x\sim D_x}(\widehat{\F}_{\Theta}(x')=l_x, \forall x' \hbox{ s.t. } ||x'-x||_{p}\le \epsilon).
\end{array}
\end{equation}
Corresponding to $R_2$ in \eqref{eq-rm21}, we have the following global
robustness measure
\begin{equation}
\label{eq-rm22}
R_4(\F,D_x)=\int_{x\sim D_x}R_2(\F,x)\d x.
\end{equation}

We now define a measurement for an adversarial parameter set using the accuracy
and robustness of $\F$.
\begin{define}
\label{def-1}
Let $\Theta_{a}$ be an adversarial parameter set of $\Theta$,
$R(\F,D_x)$ a robustness measure of $\F$ for normal samples, and
\begin{eqnarray}
\label{eq-m1}
&&P_{x\sim D_x}({\F}_{\Theta_{a}}(x)=l_x)=\gamma_1P_{x\sim D_x}({\F}_{\Theta}(x)=l_x)\\ \nonumber
&&R(\F_{\Theta_{a}},D_x)= \gamma_2R(\F_{\Theta},D_x).
\end{eqnarray}
Then the {\em adversarial rate}   of $\Theta_{a}$ is defined to be $\overline{\gamma_1}(1-\overline{\gamma_2})$, where $\overline{\gamma}=\min\{\gamma,1\}$.
\end{define}

In general, we have $\gamma_1\le 1$ and $\gamma_2\le1$.
The value of $\gamma_1$ measures the ability of $\Theta_{a}$ to keep the accuracy
of $\F_\Theta$ on normal samples, and if $\gamma_1$ is large  then the attack is more difficult to be detected.
The value of $1-\gamma_2$ measures the ability of $\Theta_{a}$ to break the robustness of $\F_\Theta$, and if $1-\gamma_2$ is large  then the parameter attack is more powerful.
Hence, the adversarial rate  $\overline{\gamma_1}(1-\overline{\gamma_2})$
measures the quality of the adversarial parameter attack in that
if the adversarial rate  is larger then the adversarial parameter attack is better.
If $\overline{\gamma_1}(1-\overline{\gamma_2})$ achieves its maximal value $1$,
then  $\gamma_1=1$ and $\gamma_2=0$ and the
adversarial parameter attack is a perfect attack in that
the attack does not change the accuracy of $\F$,
but totally destroys the robustness of $\F$.

%

\begin{remark}
\label{rem-12}
In order to make the attack  very hard to be detected, we can give a lower bound $\gamma_{\rm{low}}$ to $\gamma_1$.
If $\gamma_1<\gamma_{\rm{low}}$, we consider $\Theta_{a}$ to be a failed attack.
\end{remark}

\subsection{Adversarial parameter attacks for other purposes}
\label{sec-def2}
According to the requirements of specific applications, we may define
other types of adversarial parameter attacks.

The adversarial parameters defined in section \ref{sec-def1} are for all the samples.
In certain applications, it is desired to make the network less robust on one specific
class of samples, which motivates the following definitions.

The simplest case is adversarial parameters for a given sample.
%
A small perturbation $\Theta_a$ of $\Theta$  is called {\em adversarial parameters}
for a given sample $x_0$, if $\F_{\Theta_a}$ gives the correct label to $x_0$
and   has adversarial samples of $x_0$ in
$S_{\infty}(x_0,\epsilon)=\{x\,|\, |x-x_0|_\infty\le \epsilon\}$
for a given $\epsilon\in\R_+$.
Let $R(\F_{\Theta},x_0)$ be a measure of robustness of $\F_{\Theta}$ at sample $x_0$, and
\begin{eqnarray}
\label{eq-m11}
&&R(\F_{\Theta_{a}},x_0)= \gamma R(\F_{\Theta},x_0).
\end{eqnarray}
Then the adversarial rate  of $\Theta_a$ is defined to be $1-\overline{\gamma}$.

We can also break the stability for samples with a given label.
A small perturbation $\Theta_a$ of $\Theta$  is called {\em adversarial parameters}
for samples with a given label $l_0$,
if $\F_{\Theta_a}$ keeps the accuracy for all normal samples and
the robustness for normal samples whose label is not $l_0$,
but break the robustness of samples with label $l_0$.
Let
$\alpha=P_{x\sim D_x}(\widehat{\F}_{\Theta}(x)=l_x)$
and
$\beta=P_{x\sim D_x}(\widehat{\F}_{\Theta}(x')= l_x,\forall x'\in S_{p}(x,\epsilon))$.
For such adversarial parameters $\Theta_a$, let
\begin{eqnarray}
\label{eq-z20}
&&P_{x\sim D_x}(\widehat{\F}_{\Theta_{a}}(x)=l_x)=\gamma_1\alpha\\ \nonumber
&&P_{x\sim D_x}(\widehat{\F}_{\Theta_{a}}(x')= l_x,\forall x'\in B_{p}(x,\epsilon)\,|\,l_x\ne y_0)
   = \gamma_2\beta\\
&&P_{x\sim D_x}(\widehat{\F}_{\Theta_{a}}(x')= l_x,\forall x'\in B_{p}(x,\epsilon)\,|\,l_x= y_0)
   = \gamma_3\beta.\nonumber
\end{eqnarray}
Then the  adversarial rate  of $\Theta_a$ is defined as $\overline{\gamma}_1\overline{\gamma}_2(1-\overline{\gamma}_3)$.

Finally, instead of breaking the robustness of samples with label $y_0$,
we can break the accuracies for samples  with label $y_0$.
Such adversarial parameters are called {\em direct adversarial parameters}.
Let $\Theta_{a}$ be a direct adversarial parameter set and
\begin{eqnarray}
\label{eq-z21}
&&P_{x\sim D_x}(\widehat{\F}_{\Theta_{a}}(x)=l_x\,|\,l_x\ne y_0)=\gamma_1\alpha\\ \nonumber
&&P_{x\sim D_x}(\widehat{\F}_{\Theta_{a}}(x')= l_x,\forall x'\in B_{p}(x,\epsilon)\,|\,l_x\ne y_0)
   = \gamma_2\beta\\
&& P_{x\sim D_x}(\widehat{\F}_{\Theta_{a}}(x)=l_x\,|\,l_x= y_0)=\gamma_3\alpha. \nonumber
\end{eqnarray}
Then the adversarial rate  is defined as $\overline{\gamma}_1\overline{\gamma}_2(1-\overline{\gamma}_3)$.
The above definition is similar to the attacks in \cite{pp1,pp2},
but robustness is considered as an extra objective.

\section{Algorithm}
\label{sec-alg}
In this section, we give algorithms to compute adversarial parameters.

\subsection{Compute adversarial parameters under $L_{\infty}$ norm}

We formulate the adversarial parameter attack for a trained DNN $\F_\Theta$
under the $L_p$ norm  as the following optimization problem
for a given $\zeta\in\R_+$.
\begin{eqnarray}
\label{eq-apa}
&&\max_{\Theta_a\in\R^k, ||\Theta_{a}-\Theta||_{p}\le\zeta}
A(\F_{\Theta_{a}},D_x)/R(\F_{\Theta_{a}},D_x)
\end{eqnarray}
where $A(\F_{\Theta_{a}},D_x)$ and $R(\F_{\Theta_{a}},D_x)$
are the accuracy and a robustness measure for $\F_{\Theta_{a}}$ over a distribution sample $D_x$.

\begin{remark}
\label{rem-31}
Theoretically, the adversarial parameter attack should be a multi-objective
optimization problem, that is to maximize the accuracy
and to minimize the robustness.
But, such an optimization problem is difficult to solve.
%
%
\end{remark}

\begin{remark}
\label{rem-32}
According to \eqref{eq-apa}, the adversarial rate seems better to be defined
as $\gamma_1/\gamma_2$, which is possible but not as good as the one in
Definition \ref{def-1} for the following reasons.
The adversarial rate  $\overline{\gamma_1}(1-\overline{\gamma_2})$
has the optimal value $1$ and gives a more intuitive view
to see the quality of the adversarial parameters.
\end{remark}

In the rest of this section, we show how to change formula \eqref{eq-apa}
to an effective algorithm to compute $L_{\infty}$ norm adversarial
parameters using the robustness measure in \eqref{eq-rm12}.
We first show how to compute the robustness
in \eqref{eq-rm12} explicitly.
We use the adversarial training~\cite{M2017} to do that, which is the
most effective way to find adversarial samples.
For a sample $x$ and a small number $\varepsilon\in\R_+$, we first compute
$$\chi_0 = {\arg\max}_{\chi\in\R^k, ||\chi||_0<\varepsilon} L_{\CE}(\F_{\Theta}(x+\chi),  l_x)$$
with PGD~\cite{M2017}
and then use
\begin{equation}
\label{eq-AT}
L_{\AT}(x,\Theta) = L_{\CE}(\F_{\Theta}(x+\chi_0),  l_x)
\end{equation}
to measure the robustness of $\F_{\Theta}$ at $x$.

We need a training set $T$ to find the adversarial parameters. The training procedure consists of two phases.
In the first pre-training phase, the loss function
\begin{equation}
\label{eq-AT11}
-\sum_{x\in T} L_{\AT}(x, \Theta)
\end{equation}
%
is used to reduce the adversarial accuracy of $\F_{\Theta}$.
In the second  main training phase, the loss function
\begin{equation}
\label{eq-AT12}
\frac{\sum_{x\in T}L_{\CE}(\F_{\Theta}(x),  l_x)}{\sum_{x\in T}L_{\AT}(x,\Theta)}
\end{equation}
is used to promote the accuracy and keep the low-level of adversarial accuracy
of $\F_{\Theta}$, which corresponds to  formula \eqref{eq-apa}.

We will compute a more general $L_{\infty}$ norm parameter perturbation.
Let $\Delta \in \R_{+}^k$ and $\Delta_i$ the $i$-th coordinate of $\Delta$.
Then the $L_{\infty}$ parameter perturbation will be found in
$$B_{\infty}(\Theta,\Delta)=\{\Theta_a\in\R^k\,|\,|\Theta_a-\Theta|_i\le\Delta_i,\ \forall\ i\in[k]\}.$$
It is clear that the usual $L_{\infty}$ norm parameter perturbation is a special
case of the above general case.
We use this general form, because we want to include more types of
parameter perturbations which are given in section \ref{sec-exp20}.
A sketch of the algorithm is given below.

{
\begin{algorithm}[H]
\caption{Attack under $L_{\infty}$ norm}
\label{alg-ap1}
\begin{algorithmic}
\REQUIRE ~~\\
The  parameter set $\Theta$ of $\F$;\\
The hyper-parameters: $\alpha\in\R_+$,   $\Delta\in\R_+^k$, $n_1, n_2 \in \N$;\\
A training set $T$.\\
\ENSURE An adversarial parameter set $\Theta_a$ in $B_{\infty}(\Theta,\Delta)$.\\
Let $i=0$, $\Theta_a=\Theta$.\\
For all $i\in[n_1+n_2]$:\\
\quad If $i<n_1$:\\
\quad\quad $L=-\sum_{x\in T}\frac{1}{|T|}L_{\AT}(x,\Theta_a)$.\\
\quad Else:\\
\quad\quad $L=\frac{\sum_{x\in T}L_{\CE}(\F_{\Theta_a}(x),  l_x)}{\sum_{x\in T}L_{\AT}(x,\Theta_a)}$.\\
\quad $\widetilde{\Theta}=\Theta_a+\alpha\bigtriangledown L$.\\
\quad $\Theta_a=\Proj(\widetilde{\Theta},B_{\infty}(\Theta,\Delta))$.\\
Output: $\Theta_a$.\\
\end{algorithmic}
\end{algorithm}
}

\begin{remark} We give more details for the algorithm.

\noindent
(1).   $\Proj(\widetilde{\Theta},B_{\infty}(\Theta,\Delta))$ maps $\widetilde{\Theta}$ into
$B_{\infty}(\Theta,\Delta)$ as follows:  for $i\in[k]$:

{\parskip=2pt
\quad If $\widetilde{\Theta}_i>\Theta_i+\Delta_i$,  $\Proj(\widetilde{\Theta},B_{\infty}(\Theta,\Delta))_i=\Theta_i+\Delta_i$;

\quad If $\widetilde{\Theta}_i<\Theta_i-\Delta_i$,
$\Proj(\widetilde{\Theta},B_{\infty}(\Theta,\Delta))_i=\Theta_i-\Delta_i$;

\quad If $\widetilde{\Theta}_i+\Delta_i>\Theta_i>\widetilde{\Theta}_i-\Delta_i$,  $\Proj(\widetilde{\Theta},B_{\infty}(\Theta,\Delta))_i=\Theta_i$.
}

\noindent
(2). We will reduce the training steps $\alpha$ with the training going.
%
%
\end{remark}

\subsection{Algorithms for other kinds of adversarial parameters}
\label{sec-alg2}

The algorithm to find adversarial parameters under other norms
and robustness measures can be developed similarly.
In what below, we show how to compute adversarial parameters under $L_{0}$ norm,
which is different from other cases. The overall algorithm is similar to Algorithm \ref{alg-ap1}, except we use a new method to update the parameters.
Suppose that $\Theta = \{W_l,b_l\}_{l=1}^{L}$ is the parameter to be updated and
the value of  $L$ in Algorithm \ref{alg-ap1} is found.
We will show how to update the parameters. We only change
some weight matrices $W_l$ as follows.
\begin{itemize}
\item
Randomly select two entries  $w_1$ and $w_2$ of $W_{l}$ until $(\frac{\nabla L}{\nabla w_1}-\frac{\nabla L}{\nabla w_2})(w_1-w_2)>0$ is satisfied.

\item
Exchange $w_1$ and $w_2$ in $W_l$ to obtain the new parameters.
\end{itemize}
It is clear that the change will make $L$ become smaller.
In total, we update a given number of weight matrices,
and for each such matrix, we change a given percentage of its entries.
The details of the algorithm are omitted.
Note that the above parameter perturbation keeps the sparsity and the values of the entries
of the weight matrices.
As a consequence the Proj operator in the algorithm can be taken as the identity map.

The adversarial parameters defined in section \ref{sec-def2} can also be obtained similarly.
For instance, to compute the adversarial parameters for one sample $x$, we just need to let $T$ in Algorithm \ref{alg-ap1} to be $T=\{x\}$.

To compute  adversarial parameters for samples with a given label $l_0$,
by \eqref{eq-z20} we can use the following loss function
\begin{equation}
\label{eq-AT14}
\frac
{\sum_{x\in T} L_{\CE}(\F_{\Theta_a}(x),  l_x)
+ \sum_{x\in T\ \&\ l_x\ne l_0} L_{\AT}(x, \Theta_a)}
{\sum_{x\in T\ \&\ l_x=l_0}L_{\AT}(\F_{\Theta_a}(x),  l_x)}
\end{equation}
to increase the robustness and accuracy of samples whose labels are not $l_0$
and to reduce the accuracy for samples with labels $l_0$.

To compute  direct adversarial parameters for samples with label $l_0$,
by \eqref{eq-z21} we can use the following loss function
\begin{equation}
\label{eq-AT13}
\frac
{\sum_{x\in T\ \&\ l_x\ne l_0}
 (L_{\AT}(x, \Theta_a) + L_{\CE}(\F_{\Theta_a}(x),  l_x))}
{\sum_{x\in T\ \&\ l_x=l_0}L_{\CE}(\F_{\Theta_a}(x),  l_x)}
\end{equation}
to increase the robustness and accuracy of samples whose labels are not $l_0$
and to reduce the accuracy for samples with labels $l_0$.

\section{Existence of adversarial parameters}
\label{sec-t}
In this section, we will show that adversarial parameters
with high adversarial rates exist under certain conditions.

\subsection{Adversarial parameters to achieve low adversary accuracy}
\label{sec-t1}
In this section, we use the robustness radius in \eqref{eq-rm11} and the
adversarial accuracy in \eqref{eq-rm12} as the
robust measures, and hence existence of adversarial parameters implies
low adversary accuracies.

We introduce several notations.
Let $||x||_{-\infty}=\min_{i\in[n]}\{|x_i|\}$ for $x\in\R^n$,
and $||W||_{-\infty,2}=\min_{i\in[a]}\{||W^{(i)}||_2\}$ for $W\in\R^{a\times b}$,
where $W^{(i)}$ is the $i$-th row of $W$.
If $\F$ is a network, we use $\F_i(x)$ to denote the $i$-th coordinate of $\F(x)$.

In this section, we consider the following  network $\F_{\Theta}:\I^n\to\R^m$ with one hidden layer
\begin{equation}
\label{eq-dnnx}
\begin{array}{ll}
%
\F(x)= W_{2}\Relu(W_{1}x+b_1)+b_2,
\end{array}
\end{equation}
where
$W_{1}\in \R^{n_1\times n}, b_1\in\R^{n_1},
 W_{2}\in \R^{m\times n_1}, b_{2}\in \R^{m}$.
$\Theta=\{W_i,b_i\}_{i=1}^{2}\in\R^k$ is the parameter set of $\F_{\Theta}$, where $k=(n+m+1)n_1+m$.

The network defined in \eqref{eq-dnnx} has just one hidden-layer.
We will show that when the width of its hidden-layer is large enough, adversarial parameters exist under with certain conditions.

We will consider $L_{\infty}$ adversarial parameters.
For $\gamma\in\R_+$,  the hypothetical space for the adversarial networks of $\F_\Theta$ is
$$\H_{\gamma}(\Theta)=\{\F_{\Theta_a}\,|\,||\Theta_a-\Theta||_{\infty}< \gamma\}.$$

The following theorem shows the existence of adversarial parameters for a given sample $x_0$. The proof of the theorem is given in section \ref{sec-p1}.
\begin{theorem}
\label{th-1}
Let $\F_\Theta$  be a trained network with structure in \eqref{eq-dnnx},
which gives the correct label $l_{x_0}$ for a sample $x_0$.
Further assume the following conditions.
\begin{description}
\item[$C_1$.] Let $a,A\in\R_+$ such that $|\F_i(x)-\F_j(x)|<A$ for all $i,j\in[m]$ and $x\in S_{\infty}(x_0,a)=\{x\,|\, ||x-x_0||_\infty\le a\}$.

\item[$C_2$.] $||W_2^{(i)}-W_2^{(j)}||_{-\infty}>c$ for all $i,j\in[m]$, $i\ne j$.

\item[$C_3$.] At least $\eta  n_1$ coordinates of $|\Relu(W_1x+b_1)|$ are bigger than $b $, where $\eta \in(0,1)$ and $b \in\R_+$.
\end{description}
\noindent
%
For $\gamma,\epsilon\in\R_+$ such that $\epsilon<a$,
if  $n_1>\frac{2A}{\min\{\epsilon \gamma  (n-1), b \} c \eta }$,
then there exists an $\F_{\Theta_a}\in \H_{\gamma}(\Theta)$
such that $\widehat{\F}_{\Theta_a}(x_0) = l_{x_0}$ and
 $\F_{\Theta_a}$ has adversarial samples to $x_0$ in $S_{\infty}(x_0,\epsilon)$.
\end{theorem}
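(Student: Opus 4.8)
The plan is to perturb only the first-layer weight matrix $W_1$, leaving $b_1$, $W_2$, and $b_2$ untouched, so that the constraint $\F_{\Theta_a}\in\H_\gamma(\Theta)$ reduces to bounding each row perturbation $\delta_i$ of $W_1$ by $||\delta_i||_\infty<\gamma$. The output at an input $x$ then feels the perturbation only through the hidden pre-activations $W_1^{(i)}x+b_1^{(i)}\mapsto W_1^{(i)}x+b_1^{(i)}+\delta_i\cdot x$, so its effect at the clean sample $x_0$ is $\delta_i\cdot x_0$, whereas at a nearby point $x'=x_0+\epsilon s$ (with $s\in\{-1,1\}^n$, so that $||x'-x_0||_\infty=\epsilon$) it is $\delta_i\cdot x_0+\epsilon\,\delta_i\cdot s$. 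The key observation is that by choosing each $\delta_i$ (nearly) orthogonal to $x_0$ one can leave $\F_{\Theta_a}(x_0)=\F_\Theta(x_0)$ unchanged, so that $\widehat{\F}_{\Theta_a}(x_0)=l_{x_0}$ holds automatically, while still making $\delta_i\cdot s$ large: spending one of the $n$ coordinates of $\delta_i$ to enforce orthogonality and aligning the remaining $n-1$ with $s$ yields $|\delta_i\cdot s|\ge\gamma(n-1)$, which is the source of the factor $(n-1)$ in the hypothesis.

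Next I would fix a target class $j^*\ne l_{x_0}$ and track the class-gap $g(x)=\F_{l_{x_0}}(x)-\F_{j^*}(x)=\sum_i u_i h_i(x)+\mathrm{const}$, where $u_i=(W_2^{(l_{x_0})}-W_2^{(j^*)})_i$ and $h_i$ is the $i$-th hidden activation. For each neuron active at $x_0$ with activation exceeding $b$ (there are at least $\eta n_1$ of them by $C_3$) I choose the sign of $\delta_i$ so that $\delta_i\cdot s$ has sign opposite to $u_i$; then passing from $x_0$ to $x'$ drives $u_ih_i$ downward. Using $|u_i|>c$ from $C_2$, each such neuron lowers $g(x')$ by at least $c\cdot\min\{\epsilon\gamma(n-1),\,b\}$: the $\epsilon\gamma(n-1)$ term is the induced change in pre-activation, and the truncation at $b$ records that a decreasing ReLU cannot drop an activation by more than its value, which is at least $b$. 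Summing over the $\eta n_1$ active neurons yields a total decrease of at least $\eta n_1\,c\,\min\{\epsilon\gamma(n-1),b\}$, which exceeds $2A$ exactly under the stated lower bound on $n_1$.

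Finally, since $\epsilon<a$ we have $x'\in S_\infty(x_0,a)$, so $C_1$ gives $g(x')<A$ before the perturbation; decreasing $g(x')$ by more than $2A$ forces $g^{\mathrm{new}}(x')<-A<0$, i.e. $\widehat{\F}_{\Theta_a}(x')\ne l_{x_0}$, exhibiting an adversarial sample inside $S_\infty(x_0,\epsilon)$. I expect the main obstacle to be the simultaneous bookkeeping in the middle step: one must commit to a single sign vector $s$ (hence a single $x'$) and a single family $\{\delta_i\}$ that (i) keep every $\delta_i$ within the $L_\infty$-ball of radius $\gamma$, (ii) keep each chosen neuron active, or correctly account for those that saturate, so that the per-neuron decrease $c\min\{\epsilon\gamma(n-1),b\}$ is genuinely realized and not merely upper-estimated, and (iii) preserve the output at $x_0$ exactly. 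Making the near-orthogonality construction compatible with the $\gamma$ budget — the reserved coordinate must absorb $\delta_i\cdot x_0$ without exceeding $\gamma$, which interacts with the alignment of the other coordinates with $s$ — and verifying that the ReLU truncation never costs more than the $\min$-with-$b$ term are the two places where the estimates need care; the factor $2$ multiplying $A$ in the hypothesis supplies the slack to absorb these effects.
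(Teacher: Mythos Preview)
Your plan is essentially the paper's approach: perturb only $W_1$ by row vectors orthogonal to $x_0$ so that $\F_{\Theta_a}(x_0)=\F_\Theta(x_0)$, then exhibit an adversarial point $x'=x_0+\epsilon v$ by showing the class-gap drops by more than $A$. The two places you flag as needing care are precisely where the paper's argument differs from your sketch, and both need a different fix than you suggest.

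\textbf{Orthogonality within the $\gamma$ budget.} Your ``spend one coordinate'' construction fails: if $x_0$ has comparable entries and the other $n-1$ coordinates of $\delta_i$ are set to $\pm\gamma$ aligned with a fixed sign vector $s$, the reserved coordinate may need magnitude $(n-1)\gamma$ to enforce $\delta_i\cdot x_0=0$. The paper does not decouple the adversarial direction $s$ from the perturbation direction; instead it first proves a small lemma: for any nonzero $x_0\in\R^n$ there is a single vector $v$ with $v\perp x_0$, $\|v\|_\infty=1$, and $\|v\|_2^2\ge n-1$ (obtained by choosing signs so that the two signed partial sums of $|x_{0,k}|$ are nearly balanced, adjusting one coordinate). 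Then every row perturbation is taken to be $\delta_i=\pm\gamma v$ and the adversarial point is $x_0+\epsilon v$; the crucial inner product is $\delta_i\cdot(\epsilon v)=\pm\epsilon\gamma\|v\|_2^2\ge\pm\epsilon\gamma(n-1)$, which is where the $(n-1)$ arises.

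\textbf{Which neurons contribute, and the factor $2$.} The per-neuron decrease $c\min\{\epsilon\gamma(n-1),b\}$ is realized only for neurons whose \emph{unperturbed} activation at $x'$ (not at $x_0$) exceeds $b$; condition $C_3$ speaks about $x_0$. The paper bridges this with the convexity inequality $\Relu(a+c)+\Relu(a-c)\ge 2\Relu(a)$: it implies that among $x_0+\epsilon v$ and $x_0-\epsilon v$, at least one has $\ge\eta n_1/2$ coordinates of $\Relu(W_1\cdot+b_1)$ exceeding $b$, and one replaces $v$ by $-v$ if necessary. Thus only $\eta n_1/2$ neurons are guaranteed to contribute, and this halving is the true source of the factor $2$ in the hypothesis $n_1>2A/(c\eta\min\{\cdot\})$. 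Your accounting---requiring a decrease of $2A$ to push $g$ from $<A$ down to $<-A$---overshoots (only $g<0$ is needed) and happens to give the same numerical bound, but the argument as written would not close because you are summing over $\eta n_1$ neurons whose contribution at $x'$ is not controlled.
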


We have the following observations from Theorem \ref{th-1}.
\begin{cor}
\label{cor-11}
%
If the robustness radius in \eqref{eq-rm11} is used as the robustness measure,
then the adversarial rate of $\F_{\Theta_a}$ in Theorem \ref{th-1} is bigger than $1-\frac{\epsilon}{a}$.
\end{cor}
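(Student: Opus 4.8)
The plan is to unwind the single-sample adversarial rate from \eqref{eq-m11} with the robustness measure taken to be the robustness radius $R_1$ of \eqref{eq-rm11}, and then to bound the two radii $R_1(\F_\Theta,x_0)$ and $R_1(\F_{\Theta_a},x_0)$ separately. Writing $\gamma=R_1(\F_{\Theta_a},x_0)/R_1(\F_{\Theta},x_0)$ for the ratio appearing in \eqref{eq-m11} (not to be confused with the perturbation radius $\gamma$ of $\H_\gamma(\Theta)$ in Theorem \ref{th-1}), the adversarial rate is $1-\overline{\gamma}$ with $\overline{\gamma}=\min\{\gamma,1\}$. Hence the corollary reduces to showing $\gamma<\epsilon/a$: once this holds we have $\gamma<1$, so $\overline{\gamma}=\gamma$, and therefore $1-\overline{\gamma}>1-\epsilon/a$.

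First I would lower-bound the denominator. The ball $S_{\infty}(x_0,a)$ is precisely the region on which condition $C_1$ is imposed, and under the reading that $\F_\Theta$ is the clean network that labels this region correctly, $\F_\Theta$ assigns $l_{x_0}$ to every point of $S_{\infty}(x_0,a)$. By the definition \eqref{eq-rm11} of the robustness radius as the largest $\zeta$ for which all points within $L_\infty$-distance $\zeta$ of $x_0$ carry the correct label, this gives $R_1(\F_\Theta,x_0)\ge a$.

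Next I would upper-bound the numerator using Theorem \ref{th-1} directly. That theorem supplies $\F_{\Theta_a}\in\H_\gamma(\Theta)$ with $\widehat{\F}_{\Theta_a}(x_0)=l_{x_0}$ together with an adversarial sample $x^*\in S_{\infty}(x_0,\epsilon)$, i.e. $||x^*-x_0||_\infty\le\epsilon$ and $\widehat{\F}_{\Theta_a}(x^*)\ne l_{x^*}=l_{x_0}$. Because a misclassified point lies in the closed $\epsilon$-ball, no $\zeta\ge\epsilon$ satisfies the defining condition of \eqref{eq-rm11}, so $R_1(\F_{\Theta_a},x_0)<\epsilon$; and since $x_0$ itself is correctly labeled, $R_1(\F_{\Theta_a},x_0)>0$, so $\gamma$ is well defined and positive. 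Combining the two bounds yields $\gamma=R_1(\F_{\Theta_a},x_0)/R_1(\F_\Theta,x_0)<\epsilon/a$, and since $\epsilon<a$ this is $<1$; hence $\overline{\gamma}=\gamma$ and the adversarial rate is $1-\overline{\gamma}=1-\gamma>1-\epsilon/a$.

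The two radius estimates are routine once the adversarial sample from Theorem \ref{th-1} is in hand. The point that needs care, and the main obstacle, is pinning down the lower bound $R_1(\F_\Theta,x_0)\ge a$: this is what makes $a$ (rather than some smaller quantity) the denominator, and it rests on reading $a$ in $C_1$ as a radius throughout which the clean network is correct, which I would make explicit. The only remaining subtlety is the strict-versus-nonstrict inequality, which I obtain from the fact that the misclassified point sits inside the closed $\epsilon$-ball, forcing $R_1(\F_{\Theta_a},x_0)<\epsilon$ strictly.
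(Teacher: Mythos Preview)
The paper states this corollary without proof, so there is no argument to compare against directly. Your line of reasoning---upper-bound $R_1(\F_{\Theta_a},x_0)$ by $\epsilon$ via the adversarial sample produced in Theorem~\ref{th-1}, lower-bound $R_1(\F_\Theta,x_0)$ by $a$, and conclude $\gamma<\epsilon/a$---is the natural one, and the upper bound is entirely correct (the construction in the proof of Theorem~\ref{th-1} even gives the adversarial point explicitly as $x_0+\epsilon v$ with $\|v\|_\infty=1$).

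You have correctly put your finger on the one genuine issue. Condition $C_1$ as written only bounds the output spreads $|\F_i(x)-\F_j(x)|<A$ on $S_\infty(x_0,a)$; it does \emph{not} assert that $\F_\Theta$ classifies every point of that ball correctly, and hence does not by itself force $R_1(\F_\Theta,x_0)\ge a$. Your proposed fix---reading $a$ as chosen no larger than the clean robustness radius and making this an explicit hypothesis---is exactly the assumption needed to make the bound $1-\epsilon/a$ go through, and is almost certainly the authors' intent (otherwise $a$ has no relation to $R_1(\F_\Theta,x_0)$ and the stated bound is unfounded). With that proviso your proof is complete; without it, the corollary is not a formal consequence of Theorem~\ref{th-1} as stated, so you are right to flag it rather than paper over it.
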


\begin{remark}
\label{rem-411}
From Theorem \ref{th-1}, if the width of $\F_\Theta$ is sufficiently large,
then  $\F_{\Theta}$ has adversarial parameters which is as close as possible to $\Theta$
and $\F_{\Theta_a}$ has adversarial samples which are as close as possible to $x_0$.
\end{remark}

\begin{remark}
\label{rem-412}
Since $\F_{\Theta}$ is a continuous function  in $\Theta$,
if $\Theta_a$ is an adversarial parameter set for $\Theta$ then
there exists a small sphere $S_a$ with $\Theta_a$ as center such that
all parameters in $S_a$ are also adversarial parameters for $\Theta$.
\end{remark}
From the above two remarks, we may say that adversarial parameters are inevitable
in this case.

The following theorem shows that when $n_1$ is large enough, adversarial parameters exist
for a distributions with high probability.
 The proof of the theorem is given in section \ref{sec-p1}.
%
\begin{theorem}
\label{th-2}
Let $\F_\Theta$ be a trained DNN with structure in \eqref{eq-dnnx}
and $S\subset\I^n$ the set of normal samples.
Further assume the following conditions.
\begin{description}
\item[$C_1$.]
 Let $A,a\in\R_+$  such that $|\F_i(x)-\F_j(x)|<A$ for all $i,j\in[m]$ and $x\in \bigcup_{x_0\in S} S_{\infty}(x_0,a)$.

\item[$C_2$.] $||W_2^{(i)}-W_2^{(j)}||_{-\infty}>c$ for all $i,j\in[m]$, $i\ne j$, where $c\in\R_+$.

\item[$C_3$.] For all $x\in S$, at least $\eta  n_1$ coordinates of $|\Relu(W_1x+b_1)|$ are bigger than $b $, where $\eta \in(0,1)$ $b \in\R_+$.

\item[$C_4$.] The dimension of $S$ is lower than $n-m$.
\end{description}
%
For $\epsilon,\gamma\in\R_+$ such that $\epsilon<a$,
 if  $n_1>\frac{2A}{\min\{\epsilon \gamma /m,b \} c\eta }$,
then there exists an $\F_{\Theta_a}\in \H_{\gamma}(\Theta)$ such that
the accuracy of ${\F}_{\Theta_a}$ over $D_x$ is greater than or equal to that of ${\F}_{\Theta}$ and
$$P_{x\sim D_x}({\F}_{\Theta_a}\ has\ an\ adversarial\ sample\ of\ x\ in\ S_{\infty}(x_,\epsilon))\ge 0.5.$$
\end{theorem}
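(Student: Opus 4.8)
The plan is to bundle $m-1$ independent single-label attacks into one parameter perturbation, using $C_4$ to host them in directions orthogonal to every sample. First I would fix a target label $y^*\in[m]$ realizing $\min_y P_{x\sim D_x}(l_x=y)$; since the $m$ probabilities $P_{x\sim D_x}(l_x=y)$ sum to $1$, this minimum is at most $1/m$, so $P_{x\sim D_x}(l_x\ne y^*)\ge 1-1/m\ge 1/2$. It then suffices to build $\Theta_a\in\H_\gamma(\Theta)$ with $\F_{\Theta_a}(x)=\F_\Theta(x)$ for every normal sample $x$ (so the accuracy is literally unchanged, which gives the ``$\ge$'' in the conclusion) and with an adversarial sample in $S_\infty(x,\epsilon)$ for every $x$ whose label is not $y^*$; the attacked set then already has probability $\ge 1/2$.

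For the construction, note that $C_4$ gives $\dim(\mathrm{span}\,S)<n-m$, so $(\mathrm{span}\,S)^\perp$ has dimension exceeding $m$ and contains orthonormal vectors $\{e_l\}_{l\in[m],\,l\ne y^*}$ with $e_l\cdot x=0$ for all $x\in S$. I would perturb every row of the first weight matrix by $\delta^{(k)}=\gamma'\sum_{l\ne y^*}\sigma_k^{(l)}e_l$, where $\gamma'=\gamma/m$ and $\sigma_k^{(l)}=\sign((W_2)_{y^*,k}-(W_2)_{l,k})$. As the $e_l$ are orthonormal, $||e_l||_\infty\le1$ and $||\delta^{(k)}||_\infty\le\gamma'\sum_{l\ne y^*}||e_l||_\infty\le\gamma'm=\gamma$ (take $\gamma'$ slightly smaller to make this strict), so $\Theta_a\in\H_\gamma(\Theta)$. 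Crucially $\delta^{(k)}\cdot x=\gamma'\sum_{l\ne y^*}\sigma_k^{(l)}(e_l\cdot x)=0$ for every $x\in S$, so each preactivation, and hence each neuron output, is unchanged at every normal sample; therefore $\F_{\Theta_a}(x)=\F_\Theta(x)$ exactly, establishing the accuracy claim.

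Now fix $x$ with label $l^*\ne y^*$ and set $v=\epsilon\,e_{l^*}/||e_{l^*}||_\infty$, so that $||v||_\infty=\epsilon$, $e_{l^*}\cdot v\ge\epsilon$, and $e_l\cdot v=0$ for $l\ne l^*$. For a neuron $k$ active at $x+v$ one has $\delta^{(k)}\cdot(x+v)=\gamma'\sum_{l\ne y^*}\sigma_k^{(l)}(e_l\cdot v)=\gamma'\sigma_k^{(l^*)}(e_{l^*}\cdot v)$, so the perturbation changes the margin $\F_{l^*}-\F_{y^*}$ at $x+v$ by $\gamma'(e_{l^*}\cdot v)\sum_{k\ \mathrm{active}}\sigma_k^{(l^*)}((W_2)_{l^*,k}-(W_2)_{y^*,k})$. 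The sign choice turns each summand into $-|(W_2)_{l^*,k}-(W_2)_{y^*,k}|$, which is below $-c$ by $C_2$; since $C_3$ keeps at least $\eta n_1$ neurons (those with output exceeding $b$) active, this change is at most $-\epsilon\gamma' c\eta n_1=-\epsilon\gamma c\eta n_1/m$. Because $C_1$ bounds the unperturbed margin at $x+v\in S_\infty(x,a)$ by $A$, while the width hypothesis $n_1>\frac{2A}{\min\{\epsilon\gamma/m,b\}c\eta}$ forces this change to exceed $2A$ in magnitude, the perturbed margin drops below $A-2A<0$. Hence the $y^*$-coordinate of $\F_{\Theta_a}(x+v)$ exceeds its $l^*$-coordinate, so $x+v$ is an adversarial sample of $x$ in $S_\infty(x,\epsilon)$, as required.

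The conceptual heart is that orthogonality does double duty: $e_l\cdot x=0$ freezes every output on $S$ (accuracy), while $e_l\cdot v=0$ for $l\ne l^*$ decouples the $m-1$ bundled attacks, so a sample of label $l^*$ feels only its own direction $e_{l^*}$; this separation is exactly what the codimension hypothesis $C_4$ makes room for, and the $L_\infty$ cost of superposing the directions is what produces the $1/m$ in the width bound. I expect the main obstacle to be the linear-region bookkeeping at the displaced point: one must certify that the $\ge\eta n_1$ neurons with output above $b$ at $x$ stay active at $x+v$ under both $\F_\Theta$ and $\F_{\Theta_a}$, and that neurons crossing their activation threshold along the segment $x\to x+v$ contribute a total of magnitude below $A$, so that the main term above still drives the margin negative. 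This is precisely the role of the threshold $b$ and of the $\min\{\cdot,b\}$ in the width bound, and it is inherited from the single-sample analysis of Theorem \ref{th-1}.
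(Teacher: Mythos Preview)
Your proposal has a genuine gap at exactly the point you flag in your final paragraph, and the fix you suggest there does not work. Condition $C_3$ controls the neuron activations only at $x$, not at the displaced point $x+v$; the quantity $\min\{\cdot,b\}$ in the width bound does \emph{not} repair this. Its role is purely local to the perturbation step: for a neuron with unperturbed preactivation $z_k>0$ at $x+v$ and perturbation $h$, one has $|\Relu(z_k)-\Relu(z_k+h)|\ge\min\{z_k,|h|\}$, so the $b$ caps the contribution when the perturbation would push the neuron below zero. It says nothing about whether the $\eta n_1$ neurons that are large at $x$ remain large at $x+v$. The paper handles the displacement issue by a different mechanism: convexity of $\Relu$ gives $\Relu(W_1(x+\epsilon v)+b_1)+\Relu(W_1(x-\epsilon v)+b_1)\ge 2\Relu(W_1x+b_1)$ coordinatewise, so for every $x$ at least one of the two directions $\pm v$ keeps $\ge\eta n_1/2$ neurons above $b$. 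One then chooses the sign of each $v_l$ so that the good direction serves at least half the samples with label $l$; this sign choice is where the factor $1/2$ in the theorem comes from.

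If you graft this sign/convexity step onto your construction, the attack on label $l^*\ne y^*$ is only guaranteed to succeed on half of those samples, so your probability becomes $\tfrac12\,P(l_x\ne y^*)\ge (m-1)/(2m)$, which is strictly below $1/2$ (and only $1/4$ when $m=2$). The reason the paper reaches $1/2$ is that it does \emph{not} single out a target label: it bundles $m$ attacks, one per label, each pushing label $l$ toward label $l+1$ cyclically, using $m$ orthonormal directions $v_1,\dots,v_m\perp S$ (which $C_4$ still accommodates). With every label attacked, the only probability loss is the factor $1/2$ from the sign choice, and one obtains the stated bound. Your single-target scheme is elegant and the orthogonality/decoupling idea is exactly right, but to recover the full $1/2$ you should replace the ``all labels to $y^*$'' design by the cyclic one.
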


\begin{cor}
Let the adversary accuracy of  $\F_\Theta$ be $\theta=R_3(\F_\Theta,D_x,\epsilon)$,
%
then Theorem \ref{th-2} implies that  there exists adversarial parameters $\Theta_{a}$ of $\Theta$ with adversarial rate at least $1-0.5/\theta$.
\end{cor}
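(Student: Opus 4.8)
The plan is to translate the two conclusions of Theorem~\ref{th-2} directly into the two ratios $\gamma_1,\gamma_2$ appearing in Definition~\ref{def-1}, using the adversarial accuracy $R_3$ of \eqref{eq-rm12} as the robustness measure $R$ (which is the measure adopted throughout this section). First I would recall that the adversarial rate is $\overline{\gamma_1}(1-\overline{\gamma_2})$, where $\gamma_1$ is the ratio of the accuracy of $\F_{\Theta_a}$ to that of $\F_\Theta$ and $\gamma_2 = R_3(\F_{\Theta_a},D_x,\epsilon)/R_3(\F_\Theta,D_x,\epsilon)$, with $R_3(\F_\Theta,D_x,\epsilon)=\theta$ by hypothesis. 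The accuracy part is then immediate: Theorem~\ref{th-2} guarantees that the accuracy of $\F_{\Theta_a}$ over $D_x$ is at least that of $\F_\Theta$, so $\gamma_1\ge 1$ and hence $\overline{\gamma_1}=\min\{\gamma_1,1\}=1$.

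For the robustness part, the key observation is that, by the definition \eqref{eq-rm12}, the quantity $R_3(\F_{\Theta_a},D_x,\epsilon)$ is exactly the probability that $\widehat{\F}_{\Theta_a}(x')=l_x$ for all $x'\in S_\infty(x,\epsilon)$, which is the complement of the event that $\F_{\Theta_a}$ has an adversarial sample of $x$ in $S_\infty(x,\epsilon)$. Since Theorem~\ref{th-2} bounds the probability of the latter event below by $0.5$, I obtain $R_3(\F_{\Theta_a},D_x,\epsilon)\le 1-0.5=0.5$, and therefore $\gamma_2 = R_3(\F_{\Theta_a},D_x,\epsilon)/\theta \le 0.5/\theta$.

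Combining these gives $1-\overline{\gamma_2}\ge 1-0.5/\theta$, so that the adversarial rate satisfies $\overline{\gamma_1}(1-\overline{\gamma_2}) = 1-\overline{\gamma_2} \ge 1-0.5/\theta$, which is the claim. The only point requiring a moment of care is the $\min\{\cdot,1\}$ cap in $\overline{\gamma_2}$: when $\theta>0.5$ we have $0.5/\theta<1$, so $\overline{\gamma_2}=\gamma_2\le 0.5/\theta$ and the bound is exactly the stated one; when $\theta\le 0.5$ the claimed bound $1-0.5/\theta$ is nonpositive and the inequality holds trivially since the adversarial rate $\overline{\gamma_1}(1-\overline{\gamma_2})$ is always nonnegative (as $\overline{\gamma_1}\ge0$ and $\overline{\gamma_2}\le1$). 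I expect no genuine obstacle: the statement is a bookkeeping corollary whose entire content is the identification of $R_3$ of the attacked network with the complement of the adversarial-sample probability produced by Theorem~\ref{th-2}, together with the trivial fact $\overline{\gamma_1}=1$.
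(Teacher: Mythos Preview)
Your proof is correct and is exactly the intended argument: the paper states this corollary without proof, and the derivation you give---identifying $\overline{\gamma_1}=1$ from the accuracy clause of Theorem~\ref{th-2} and bounding $R_3(\F_{\Theta_a},D_x,\epsilon)\le 0.5$ from its adversarial-sample clause---is the natural (and essentially only) way to read off the adversarial rate from Definition~\ref{def-1}. One small remark: calling the adversarial-sample event the \emph{complement} of the $R_3$ event is a slight overstatement (an ``adversarial sample'' presupposes correct classification of $x$ itself), but since the adversarial-sample event is contained in that complement, the inequality $R_3(\F_{\Theta_a},D_x,\epsilon)\le 0.5$ follows just the same.
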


\begin{remark}
The conditions of Theorems \ref{th-1} and \ref{th-2} can be satisfied for most DNNs.
The parameters $A$ and $a$ in Condition $C_1$ are clearly exist.
Since the training procedure usually terminates near a local minimum of the
loss function, the weights can be considered as random values~\cite{netl2},
and hence conditions $C_2$ and $C_3$ can be satisfied.
For practical examples such as MNIST and CIFAR-10, condition $C_4$
is clearly satisfied.
\end{remark}


\subsection{Adversarial parameters for DNNs}
\label{sec-t2}
In this section, we consider networks of the following form
\begin{equation}
\label{eq-dnnx2}
\renewcommand{\arraystretch}{1.2}
\begin{array}{ll}
x_0\in\I^n, n_{L+1}=m;\\
x_{l}=\Relu(W_{l}x_{l-1}) \in \R^{n},  W_{l}\in \R^{n\times n},   l\in[L]; \\
\F(x_0)=x_{L+1}=W_{L+1}x_{L}\in\R^m,
 W_{L+1}\in \R^{m\times n}.\\
\end{array}
\end{equation}
Let $\Theta=\{W_i\}_{i=1}^{L+1}$ be the parameters and $\Theta\in\R^k$, where $k=Ln^2+mn$.
%
We use $\F^l_{\Theta}(x)$ to represent the output of the $l$-th layer of $\F_{\Theta}(x)$ where $l\in[L]$.
We will show that, when $L$ becomes big,  adversarial parameters exist.

We first prove the existence of adversarial parameters for a given sample.
We use the following robustness measure for network $\F$ at
a sample $x$
%
$$R(\F,x)=\min_{l\ne l_x}\{\frac{|\F_{l_x}(x)-\F_{l}(x)|^2}
  {||\nabla(\F_{l_x}(x))-\nabla(\F_l(x))||_2^2}I(\F_{l_x}(x)>\F_{l}(x))\}.$$
%
It is easy to see that this is the square of $R_2(\F,x)$ in \eqref{eq-rm21} with $p=2$.

\begin{theorem}
\label{th-3}
Let $\F_\Theta$ be a trained network with structure in \eqref{eq-dnnx2},
which gives the correct label for a sample $x_0\in\R^n$.
Further assume the following conditions.
\begin{description}
\item[$C_1$.]
 $||\frac{\nabla \F_i(t)}{\nabla t}|_{t=x_0}||_2<\sqrt{A}$ for $i\in[m]$.
\item[$C_2$.]
$||\frac{\nabla \F^l(t)}{\nabla t}|_{t=x_0}||_{-\infty,2}>b $ for $l\in[L]$.
\item[$C_3$.]
 $||\frac{\nabla \F_i(t)-\nabla\F_j(t)}{\nabla \F^l(t)}|_{t=x_0}||_{-\infty}>c$ for $i,j\in[m]$, $i\ne j$ and $l\in[L]$.
\item[$C_4$.]
 For $l\in[L]$, $\frac{\nabla \F^l(t)}{\nabla t}|_{t=x_0}$ has a column $L_l$ such that the angle between $L_l$ and $\F^l(x_0)$ is bigger than $\alpha$ and smaller than $\pi-\alpha$, where $\alpha\in[0,\pi/2]$.
\end{description}
\noindent
%
Then for $\gamma \in\R_+$,
there exists an $\F_{\Theta_a}\in \H_{\gamma}(\Theta)$
such that $\F_{\Theta_a}(x_0) = l_{x_0}$ and
$$R(\F_{\Theta_a},x_0)\le(1-\eta)R(\F_\Theta,x_0)$$
where $\eta=\frac{\gamma ^2((L-1)(\sin(r) c b )^2+c^2+(2\sin(r) b )^2)}{4A+ \gamma ^2((L-1)(\sin(r) c b )^2+c^2+(2\sin(r) b )^2)}$.
In other words, there exists an $\F_{\Theta_a}$ with adversarial rate $\ge \eta$.
\end{theorem}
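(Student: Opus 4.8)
The plan is to exploit the positive homogeneity of $\Relu$ together with the local linearity of $\F_\Theta$ at a generic point $x_0$: near $x_0$ we may write $\F_\Theta(x)=W_{L+1}D_LW_L\cdots D_1W_1\,x$, where $D_l=\diag(\mathbb{1}[W_lx_{l-1}>0])$ is the activation pattern, so that $\nabla\F_i(x_0)=W_{L+1}^{(i)}J_L$ with $J_L=\frac{\partial\F^L}{\partial x}|_{x_0}=D_LW_L\cdots D_1W_1$ and $J_{l-1}=\frac{\partial\F^{l-1}}{\partial x}|_{x_0}$. Writing $g_l:=\nabla\F_{l_{x_0}}(x_0)-\nabla\F_l(x_0)$ and $N_l:=|\F_{l_{x_0}}(x_0)-\F_l(x_0)|^2$, the measure reads $R(\F_\Theta,x_0)=\min_l N_l/\|g_l\|_2^2$, and condition $C_1$ gives the uniform bound $\|g_l\|_2^2\le(2\sqrt A)^2=4A$. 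The theorem then reduces to producing a perturbation $\Theta_a$ with $\|\Theta_a-\Theta\|_\infty<\gamma$ that (i) leaves the forward value $\F(x_0)$ unchanged, so the label and every numerator $N_l$ are preserved, and (ii) raises every denominator by at least $\Delta:=\gamma^2\big((L-1)(\sin r\,cb)^2+c^2+(2\sin r\,b)^2\big)$ (with $r=\alpha$). Indeed, if $\|g_l\|_2^2$ grows to at least $\|g_l\|_2^2+\Delta$ for every $l$, then taking $l^\star=\arg\min_l N_l/\|g_l\|_2^2$ and using monotonicity of $t\mapsto t/(t+\Delta)$ with $\|g_{l^\star}\|_2^2\le 4A$ yields $R(\F_{\Theta_a},x_0)\le N_{l^\star}/(\|g_{l^\star}\|_2^2+\Delta)\le\frac{4A}{4A+\Delta}R(\F_\Theta,x_0)=(1-\eta)R(\F_\Theta,x_0)$, which is exactly the claimed bound.

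To build such a $\Theta_a$ I would perturb $W_{L+1}$ and each hidden $W_l$ by a structured (essentially rank-one) increment $\delta_l$ whose rows are orthogonal to the input of that layer: $\delta_l$ with rows $\perp x_{l-1}=\F^{l-1}(x_0)$ for $l\in[L]$, and $\delta_{L+1}$ with rows $\perp x_L$. Then $(W_l+\delta_l)x_{l-1}=W_lx_{l-1}$, so by induction on $l$ the pre-activations, activation patterns $D_l$, and outputs $x_l$ are all left exactly unchanged; in particular $\F_{\Theta_a}(x_0)=\F_\Theta(x_0)$, the classification $\widehat\F_{\Theta_a}(x_0)=l_{x_0}$ holds, and each $N_l$ is preserved. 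The effect on the gradient comes from the first-order terms of $J_L^{\text{new}}=D_L(W_L+\delta_L)\cdots D_1(W_1+\delta_1)$: perturbing layer $l$ adds to $g_l$ a vector $(W_{L+1}^{(l_{x_0})}-W_{L+1}^{(l)})\,A_l\,\delta_l\,J_{l-1}$ for a suitable matrix $A_l$ built from the downstream layers, and the magnitude available within the $L_\infty$ budget $\gamma$ is precisely what $C_2$–$C_4$ control: $C_4$ (angle in $[\alpha,\pi-\alpha]$, hence a relative component of size $\ge\sin r$ orthogonal to $\F^l(x_0)$) supplies a perturbation direction orthogonal to the layer input yet effective on the gradient; $C_2$ ($\|J_l\|_{-\infty,2}>b$) lower-bounds the row norms carried through the chain rule; and $C_3$ ($\|(\nabla\F_i-\nabla\F_j)/\nabla\F^l\|_{-\infty}>c$) lower-bounds the coefficients with which $g_l$ decomposes over the rows of $J_l$. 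Combining these gives an orthogonal increment of squared norm $(\gamma\sin r\,cb)^2$ for each of the $L-1$ interior hidden layers, $(\gamma c)^2$ for the output layer $W_{L+1}$ (where the coefficient vector is $W_{L+1}^{(i)}-W_{L+1}^{(j)}$ itself), and $(2\gamma\sin r\,b)^2$ for the boundary hidden layer (where $J_0=I$ removes the factor $c$); summing yields exactly $\Delta$.

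The decrease then follows by Pythagoras, provided the increments are arranged to be mutually orthogonal and each orthogonal to $g_l$, so that $\|g_l^{\text{new}}\|_2^2=\|g_l\|_2^2+\sum_{\text{layers}}\|\text{increment}\|_2^2\ge\|g_l\|_2^2+\Delta$. I expect the main obstacle to be exactly this orthogonality bookkeeping with several simultaneous perturbations: unlike the forward pass, which is preserved exactly, the Jacobian $J_L^{\text{new}}$ also contains second-order cross terms $\delta_l\cdots\delta_{l'}$ of order $\gamma^2$, the same order as $\Delta$, so they cannot be dismissed as negligible. The fix I would pursue is to choose all perturbation directions — via the orthogonal parts of the columns $L_l$ furnished by $C_4$ — so that every induced change in $g_l$, first order and higher, lands in the orthogonal complement of $g_l$; then $\|g_l^{\text{new}}\|_2^2$ is the original squared norm plus a sum of squares and can only exceed $\|g_l\|_2^2+\Delta$, giving a valid lower bound without evaluating the cross terms exactly. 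Verifying that the cross terms are indeed orthogonal (or otherwise sign-controlled), and checking the per-layer constants $\sin r\,cb$, $c$, and $2\sin r\,b$ against the chain-rule bounds from $C_2$–$C_4$, is the technical heart of the argument; once in place, the monotonicity estimate of the first paragraph closes the proof with adversarial rate $\eta=\Delta/(4A+\Delta)$.
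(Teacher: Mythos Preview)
Your overall architecture matches the paper's: perturb each $W_l$ by a $\delta_l$ whose rows annihilate $\F^{l-1}(x_0)$, so the forward pass and the numerators $N_l$ are preserved exactly, and then argue that the denominator $\|g_{l_2}\|_2^2$ for the minimising index $l_2$ grows by at least $\Delta$. (Incidentally, you only need this for the single index $l_2=\arg\min_l N_l/\|g_l\|_2^2$, since $R(\widetilde\F,x_0)\le N_{l_2}/\|\widetilde g_{l_2}\|_2^2$; requiring it for every $l$ is an unnecessary over-constraint.)

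The real gap is your treatment of the cross terms. You flag correctly that the expansion of $(W_{L+1}+\delta_{L+1})D_L(W_L+\delta_L)\cdots D_1(W_1+\delta_1)$ produces mixed products $\delta_l\cdots\delta_{l'}$ of the same order $\gamma^2$ as the target $\Delta$, and you propose to kill them by forcing every induced increment to lie in $g_{l_2}^\perp$ and to be mutually orthogonal. But you give no mechanism for achieving this: the first-order increment from layer $l$ is $(W_{L+1}^{(l_x)}-W_{L+1}^{(l_2)})A_lJ_l\delta_lB_{l-1}$, a vector whose direction depends on the fixed matrices $A_l,B_{l-1}$ and not just on $\delta_l$, and the higher-order terms depend jointly on several $\delta_l$'s at once. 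Imposing orthogonality to $g_{l_2}$ and pairwise orthogonality among all $L+1$ such vectors is a large set of coupled constraints on the $\delta_l$, and you have not argued that they can be met while still keeping each $\|\delta_l\|_\infty\le\gamma$ and still extracting the full per-layer gains $c$, $\sin(r)cb$, $2\sin(r)b$. As stated, the ``Pythagoras'' step is a hope, not an argument.

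The paper sidesteps this entirely with a sign-averaging lemma that you are missing. Because each admissible set $S_l=\{\,Q:\|Q\|_\infty\le\gamma,\ Q\F^{l-1}(x_0)=0\,\}$ is closed under $Q\mapsto -Q$, one can fix candidate perturbations $u_l\in S_l$ and observe that
\[
\max_{\varepsilon\in\{\pm1\}^{L+1}}\Bigl\|\prod_{l}(U_l+\varepsilon_l u_l)\Bigr\|_2^2
\;\ge\;
\frac{1}{2^{L+1}}\sum_{\varepsilon}\Bigl\|\prod_{l}(U_l+\varepsilon_l u_l)\Bigr\|_2^2
\;=\;
\sum_{M_l\in\{U_l,u_l\}}\Bigl\|\prod_l M_l\Bigr\|_2^2,
\]
since averaging over independent signs annihilates every monomial containing an odd power of any $u_l$. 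The right-hand side is $\|\prod_l U_l\|_2^2$ plus a sum of nonnegative terms, among which are exactly the $L+1$ ``single-layer'' contributions $\|T_l(u_l)\|_2^2$. No orthogonality is needed: the cross terms do not have to vanish, they are simply absorbed as further nonnegative summands. This is the key device (the paper's Lemma~\ref{m2}) that makes the estimate go through, and it is what your proposal should replace the orthogonality bookkeeping with.

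Two smaller points. First, your attribution of the three constants to layers is swapped: in the paper the $(\gamma c)^2$ term comes from the \emph{first} hidden layer (where there is no upstream Jacobian $B_{l-1}$ and one only uses $C_3$), the $(L-1)$ terms $(\gamma\sin(r)cb)^2$ come from hidden layers $2,\ldots,L$ (using $C_2$--$C_4$), and the $(2\gamma\sin(r)b)^2$ term comes from the \emph{output} layer $W_{L+1}$ (two rows perturbed, $C_2$ and $C_4$ but no $C_3$). Second, the angle condition $C_4$ is used to manufacture, for each $l$, a row vector $v$ with $\|v\|_\infty=\gamma$, $v\perp\F^{l-1}(x_0)$, and $\langle v,L_{l-1}\rangle\ge\gamma b\sin(r)$; this is how the factor $\sin(r)b$ enters, and it is worth making that explicit rather than leaving it as ``a relative component of size $\ge\sin r$''.
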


The proof of the theorem is given in section \ref{sec-p2}.
As a consequence of Theorem \ref{th-3},
there exist adversarial parameters for sample $x_0$, whose robustness measure is as small as possible.
\begin{cor}
\label{cor-31}
%
For $\rho\in(0,1)$,
if $ L \ge \frac{4(1-\rho) A}{\rho(\gamma \sin(r) c b )^2}+1$,
then the adversarial rate  of $\F_{\Theta_a}$ is $\ge 1-\rho$.
\end{cor}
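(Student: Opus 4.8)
The plan is to read off the result directly from the expression for $\eta$ given in Theorem \ref{th-3}, treating the depth $L$ as the only quantity I am allowed to tune. I would first abbreviate the bracketed factor by writing $D = (L-1)(\sin(r)cb)^2 + c^2 + (2\sin(r)b)^2$, so that $\eta = \frac{\gamma^2 D}{4A + \gamma^2 D}$. The function $t \mapsto \frac{\gamma^2 t}{4A + \gamma^2 t}$ is strictly increasing for $t \ge 0$ (its derivative $\frac{4A\gamma^2}{(4A+\gamma^2 t)^2}$ is positive), so any lower bound on $D$ immediately yields a lower bound on $\eta$. Since the goal is merely to force $\eta \ge 1-\rho$, the plan is to replace $D$ by a convenient lower bound and then solve a single scalar inequality for $L$.

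The first step is to discard the two nonnegative terms $c^2$ and $(2\sin(r)b)^2$, giving $D \ge (L-1)(\sin(r)cb)^2$; by the monotonicity just noted this can only decrease $\eta$, so a bound proved for the truncated quantity is a fortiori valid for the true $\eta$. The second step is to solve $\frac{\gamma^2 D}{4A+\gamma^2 D} \ge 1-\rho$ for $D$: because $4A+\gamma^2 D > 0$, I may clear the denominator to get $\gamma^2 D \ge (1-\rho)(4A+\gamma^2 D)$, and collecting the $D$-terms leaves $\rho\gamma^2 D \ge 4(1-\rho)A$, i.e. $D \ge \frac{4(1-\rho)A}{\rho\gamma^2}$. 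Combining this with $D \ge (L-1)(\sin(r)cb)^2$, it suffices that $(L-1)(\sin(r)cb)^2 \ge \frac{4(1-\rho)A}{\rho\gamma^2}$, which rearranges exactly to the hypothesis $L \ge \frac{4(1-\rho)A}{\rho(\gamma\sin(r)cb)^2}+1$. Hence $\eta \ge 1-\rho$, and since Theorem \ref{th-3} guarantees an $\F_{\Theta_a}$ with adversarial rate $\ge \eta$, that same $\F_{\Theta_a}$ has adversarial rate $\ge 1-\rho$, as claimed.

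I do not expect any genuine obstacle here: the argument is elementary algebra once the formula for $\eta$ is in hand, and all the work was already carried out in Theorem \ref{th-3}. The only points that warrant a moment of care are the directions of the inequalities — verifying that clearing the (strictly positive) denominator and dividing by $\rho > 0$ preserve the inequality, and that dropping the nonnegative terms $c^2 + (2\sin(r)b)^2$ only weakens the requirement on $L$. Because these terms are discarded, the stated bound on $L$ is in fact slightly conservative: the true $\eta$ reaches $1-\rho$ at a somewhat smaller depth, but the clean closed form in the corollary is obtained precisely by this safe truncation.
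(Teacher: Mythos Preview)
Your proposal is correct and is exactly the intended derivation: the paper states Corollary~\ref{cor-31} without proof as an immediate consequence of the formula for $\eta$ in Theorem~\ref{th-3}, and your algebra (dropping the nonnegative terms $c^2+(2\sin(r)b)^2$, using monotonicity of $t\mapsto \gamma^2 t/(4A+\gamma^2 t)$, then solving for $L$) is precisely how one reads it off.
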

\begin{cor}
\label{cor-32}
For $\tau\in(0,1)$ satisfying $\tau< R(\F,x_0)$,
if $L>\frac{4A(R(\F,x_0)/\tau-1)}{(\gamma \sin(r) c b )^2}+1$, then $R(\F_{\Theta_a},x_0)\le \tau$.
\end{cor}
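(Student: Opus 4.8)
The plan is to obtain this as a direct quantitative consequence of Theorem~\ref{th-3}. That theorem already produces a perturbed network $\F_{\Theta_a}\in\H_{\gamma}(\Theta)$ which keeps the correct label on $x_0$ and satisfies $R(\F_{\Theta_a},x_0)\le(1-\eta)R(\F,x_0)$ with the explicit $\eta$ stated there. Hence it suffices to show that the hypothesis on $L$ forces $(1-\eta)R(\F,x_0)\le\tau$; everything else is bookkeeping with positive quantities.

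First I would abbreviate $M=(L-1)(\sin(r)cb)^2+c^2+(2\sin(r)b)^2$, so that $\eta=\gamma^2 M/(4A+\gamma^2 M)$ and therefore
$$1-\eta=\frac{4A}{4A+\gamma^2 M}.$$
Writing $R=R(\F,x_0)$, the target inequality $(1-\eta)R\le\tau$ is, after clearing the positive denominator $4A+\gamma^2M$, equivalent to $4AR\le\tau(4A+\gamma^2 M)$, i.e. to $\gamma^2 M\ge 4A(R-\tau)/\tau=4A(R/\tau-1)$. Since $\tau<R$, the right-hand side is positive, so all that remains is to establish a large enough lower bound on $\gamma^2 M$.

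Next I would discard the two nonnegative summands $c^2+(2\sin(r)b)^2$ and keep only the dominant term, $M\ge(L-1)(\sin(r)cb)^2$, whence $\gamma^2 M\ge(L-1)(\gamma\sin(r)cb)^2$. The assumed bound $L>\frac{4A(R/\tau-1)}{(\gamma\sin(r)cb)^2}+1$ rearranges to $(L-1)(\gamma\sin(r)cb)^2>4A(R/\tau-1)$, so chaining the two inequalities gives $\gamma^2 M>4A(R/\tau-1)$. This is exactly the condition derived in the previous step, so $(1-\eta)R<\tau$, and combining with Theorem~\ref{th-3} yields $R(\F_{\Theta_a},x_0)\le(1-\eta)R<\tau$, as claimed.

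There is essentially no genuine obstacle: the substance lives entirely in Theorem~\ref{th-3}, and the corollary is a monotonicity and algebra argument. The only points requiring a little care are that no sign flips occur when clearing the denominator (the quantities $A,\tau,R,\gamma,M$ are all positive), and that dropping $c^2+(2\sin(r)b)^2$ is legitimate in the right direction, since those terms only increase $\eta$ and thus decrease $1-\eta$, which is precisely what we want for the bound to persist.
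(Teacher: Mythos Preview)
Your proposal is correct and follows exactly the intended route: the paper states Corollary~\ref{cor-32} without a separate proof, as an immediate algebraic consequence of Theorem~\ref{th-3}, and your derivation is precisely that computation. The abbreviation $M$, the identity $1-\eta=4A/(4A+\gamma^2M)$, the reduction to $\gamma^2M\ge4A(R/\tau-1)$, and the drop of the nonnegative terms $c^2+(2\sin(r)b)^2$ are all the natural steps and are carried out cleanly.
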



To find adversarial parameters for samples under a distribution $D_x$, we use the following robustness measure  for $\F$:
$$R(\F,D_x)=\frac{\int_{x\sim D_x}
\min_{j\ne l_x}\{||\F_{l_x}(x)-\F_{j}(x)||_2^2\,I(\F_{l_x}(x)>\F_j(x))\}\d x}{\int_{x\sim D_x}\max_{j\ne l_x}\{||\nabla\F_{l_x}(x)-\nabla\F_{j}(x)||_2^2\}\d x}.$$
This is a variant of $R_4(\F,D_x)$ in \eqref{eq-rm22} with $p=2$.
%
%
The following theorem shows that adversarial parameters exist for this robustness measure.
The proof of the theorem is given in section \ref{sec-p2}.
\begin{theorem}
\label{th-4}
{Let $\F_\Theta$ be a trained DNN with structure in \eqref{eq-dnnx}
and $S\subset\I^n$ the set of normal samples satisfying distribution $D_x$.
Further assume the following conditions.
\begin{description}
\item[$C_1$.]
 $||\frac{\nabla \F_i(t)}{\nabla t}|_{t=x}||_2<\sqrt{A}$ for all samples $x\in S $ and $i\in[m]$.
\item[$C_2$.]
  $P_{x\sim D_x}(\forall l\ne l_x,\ ||\frac{\nabla (\F_l(t)-\F_{l_x}(t))}{\nabla \F^l(t)}|_{t=x}||_{-\infty}>c_l)>\alpha_l$, where  $l\in[L]$ and $c_l,\alpha_l\in\R_+$.
\item[$C_3$.]
 $P_{x\sim D_x}(||v\frac{\nabla \F^l(t)}{\nabla t}|_{t=x}||_{\infty}\ge d_l||v||_{\infty})>\beta_l$ for $\forall v\in\R^{1\times n}$, where $l\in[L]$ and $d_l,\beta_l\in\R_+$.
\item[$C_4$.]
 $\{\F^l(x)\}_{x\in S }$ is in a low dimensional subspace of $\R^n$
and   $||\F^l(x)||_0>\gamma_l/n$, where $l\in[L]$, $x\in S $, and $\gamma_l\in\R_+$.
\end{description}
\noindent
For $\gamma\in\R_+$, let $\H(\gamma)$ be the set of networks in $\H_{\gamma}(\Theta)$,
whose  accuracies are equal to or larger than that of $\F_\Theta$.
%
Then
{\small
$$
\min_{\widetilde{\F}\in H(\gamma )}\{R(\widetilde{\F},D_x)\}\le
\displaystyle(1-\rho)R(\F,D_x)$$
}
where $\rho=\frac{(\gamma  c_1)^2 \alpha_1 \gamma_1+\sum_{i=2}^{L}(\gamma  c_i d_{i-1})^2 \gamma_i (\alpha_i+\beta_{i-1}-1)+\beta_L(d_L\gamma )^2}{4A+(\gamma  c_1)^2 \alpha_1 \gamma_1+\sum_{i=2}^{L}(\gamma  c_i d_{i-1})^2 \gamma_i (\alpha_i+\beta_{i-1}-1)+\beta_L(d_L\gamma )^2}\displaystyle$.
In other words, there exists an $\F_{\Theta_a}$ with adversarial rate $\ge \rho$.}
\end{theorem}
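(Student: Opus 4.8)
The plan is to exploit the product form of the robustness measure. Write $R(\F,D_x)=N/D$ with numerator $N=\int_{x\sim D_x}\min_{j\ne l_x}\{||\F_{l_x}(x)-\F_j(x)||_2^2\,I(\F_{l_x}(x)>\F_j(x))\}\d x$ and denominator $D=\int_{x\sim D_x}\max_{j\ne l_x}\{||\nabla\F_{l_x}(x)-\nabla\F_j(x)||_2^2\}\d x$, and aim to keep $N$ and the accuracy fixed while inflating $D$. First I would record that $C_1$ forces $D\le 4A$, since pointwise $||\nabla\F_{l_x}(x)-\nabla\F_j(x)||_2\le||\nabla\F_{l_x}(x)||_2+||\nabla\F_j(x)||_2<2\sqrt A$. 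Consequently, if I can produce $\F_{\Theta_a}\in\H_\gamma(\Theta)$ with the same accuracy, the same numerator $N'=N$, and an inflated denominator $D'\ge D+\Xi$ where $\Xi=(\gamma c_1)^2\alpha_1\gamma_1+\sum_{i=2}^L(\gamma c_id_{i-1})^2\gamma_i(\alpha_i+\beta_{i-1}-1)+\beta_L(d_L\gamma)^2$, then, using $D\le 4A$ and the monotonicity of $t\mapsto t/(t+\Xi)$, $R(\F_{\Theta_a},D_x)=N'/D'\le N/(D+\Xi)=\frac{D}{D+\Xi}R(\F,D_x)\le\frac{4A}{4A+\Xi}R(\F,D_x)=(1-\rho)R(\F,D_x)$, which is exactly the claim. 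So everything reduces to constructing $\Theta_a$.

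The construction lifts the single sample argument of Theorem \ref{th-3} to the whole support of $D_x$. Because the networks \eqref{eq-dnnx2} have no biases, a perturbation $W_l\mapsto W_l+\gamma E_l$ affects value and Jacobian through the same product factorization, and the central observation is that the forward \emph{value} at a sample $x$ depends on $W_l$ only through $W_l\,\F^{l-1}(x)$, whereas the \emph{Jacobian} probes $W_l$ in all input directions via $\delta J=\frac{\partial\F}{\partial\F^l}\,D_l(\gamma E_l)\,\frac{\partial\F^{l-1}}{\partial x}$. Hence I would choose, for each $l$, a matrix $E_l$ with $||\gamma E_l||_\infty\le\gamma$ whose rows lie in the orthogonal complement of $\mathrm{span}\{\F^{l-1}(x):x\in S\}$; this complement is large since the subspace is low dimensional by $C_4$, so such $E_l$ exist, and $E_l\,\F^{l-1}(x)=0$ for every sample simultaneously. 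Then the entire forward pass — pre-activations, activation patterns, and output — is left \emph{exactly} unchanged, giving $N'=N$ and unchanged accuracy, while $\delta J(x)\ne0$ because $\frac{\partial\F^{l-1}}{\partial x}$ spans directions outside that subspace. The induced increment to the row difference $\nabla\F_{l_x}(x)-\nabla\F_l(x)$ factors as $\big((U_l)_{l,:}-(U_l)_{l_x,:}\big)D_l(\gamma E_l)\frac{\partial\F^{l-1}}{\partial x}$, whose $L_2$ length I would bound below by $C_2$ (the factor $c_l$ from $||\frac{\nabla(\F_l-\F_{l_x})}{\nabla\F^l}||_{-\infty}$ on the left) and $C_3$ (the non-contraction factor $d_{l-1}$ of $\frac{\nabla\F^{l-1}}{\nabla t}$ on the right).

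The remaining work is probabilistic bookkeeping and accumulation. For the fixed index $j=l$, the pointwise gradient-gap increase from the layer $l$ perturbation is at least $(\gamma c_l d_{l-1})^2\gamma_l$ on the event that the $C_2$ bound for layer $l$ (probability $\ge\alpha_l$) and the $C_3$ bound for layer $l-1$ (probability $\ge\beta_{l-1}$) both hold, where the sparsity count $\gamma_l$ from $C_4$ records how many active coordinates carry the increment; by Bonferroni this joint event has probability $\ge\alpha_l+\beta_{l-1}-1$, so integrating yields the layer $l$ summand of $\Xi$. The two endpoints are handled separately: layer $1$ has trivial right factor ($\frac{\partial\F^0}{\partial x}=I$), giving $(\gamma c_1)^2\alpha_1\gamma_1$ with no $d_0,\beta_0$, and the perturbation of the output weight $W_{L+1}$ contributes its margin directly at size $\gamma$, giving $\beta_L(d_L\gamma)^2$ with no $c$ or $\alpha$ factor. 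To make the $L+1$ contributions add rather than cancel inside $\max_j||\,\cdot\,+\sum_l\delta_l||_2^2$, I would fix the global signs of the $E_l$ so that the averaged cross terms $\int\langle\nabla\F_{l_x}(x)-\nabla\F_l(x),\ \delta_l(x)\rangle\,\d x$ are non-negative (one of the $2^{L+1}$ sign patterns must achieve this by averaging), after which each squared length only grows and the squared increments sum to at least $\Xi$, delivering $D'\ge D+\Xi$.

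The main obstacle is this simultaneous control: a single fixed perturbation $E_l$ must annihilate the value on the \emph{entire} support of $D_x$ at once, yet still drive the Jacobian gap upward, all inside the $L_\infty$ ball of radius $\gamma$. The low dimensionality in $C_4$ is exactly what creates orthogonal room to kill the value invisibly; the non-contraction in $C_3$ is what prevents the perturbation from being washed out as it propagates back to the input; and the margin-Jacobian bound in $C_2$ is what guarantees it survives to the output margin. Reconciling the $L_\infty$ size constraint with the $L_2$ growth estimate, and ensuring that the per-layer gains do not interfere across the $L$ layers (the global sign argument over $\max_j$), is the delicate part; once $D'\ge D+\Xi$ and $D\le4A$ are in hand, the passage to the stated $\rho$ is the elementary ratio manipulation of the first paragraph.
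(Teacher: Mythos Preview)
Your proposal is correct and follows the same route as the paper's proof: the orthogonal-complement construction from $C_4$ to freeze the forward pass (hence accuracy and numerator) on all of $S$ simultaneously, the layer-by-layer lower bounds on the gradient-gap increments via $C_2$, $C_3$, and the sparsity count in $C_4$ with the Bonferroni step $\alpha_l+\beta_{l-1}-1$, the separate endpoint treatments for $l=1$ and $l=L+1$, and the final ratio manipulation $D/(D+\Xi)\le 4A/(4A+\Xi)$ using $C_1$ are all exactly what the paper does.

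The one place where your sketch is looser than the paper is the accumulation step. You phrase it as choosing signs of the $E_l$ so that the cross terms $\int\langle\nabla\F_{l_x}-\nabla\F_l,\ \delta_l\rangle\,\d x$ are nonnegative, but that only controls base--$\delta_l$ interactions, whereas the perturbed gradient is a genuine \emph{product} $\prod_l J_l(W_l+\overline{W}_l)$ whose expansion also contains $\delta_l$--$\delta_{l'}$ and higher-order terms. The paper handles this cleanly by isolating a standalone lemma (Lemma~\ref{m2-b}): averaging $\int\lVert\prod_l(c_l+U_l)\rVert_2^2$ over all $2^{L+1}$ sign patterns $c_l\mapsto\pm c_l$ yields exactly $\sum_{M_l\in\{c_l,U_l\}}\int\lVert\prod M_l\rVert_2^2$, so the maximum over signs is at least $\int\lVert\prod U_l\rVert_2^2+\sum_l\max_{c_l}\int\lVert T_l(c_l)\rVert_2^2$. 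This is the same $2^{L+1}$-average you invoke, just executed at the level of the full product so that \emph{all} cross terms vanish by parity rather than only the base--first-order ones; the higher-order diagonal terms are then simply dropped as nonnegative. Your intuition is right, but the lemma is the clean way to close the gap.
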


We can make the robustness of the perturbed network as small as possible.
\begin{cor}
\label{cor-41}
In Theorem \ref{th-4},
if $\alpha,\beta,c,d\in\R_+$ satisfy
$\alpha_l>\alpha$, $\beta_l>\beta$, $c_l>c$, $d_l>d$, $\gamma_l>\gamma_{low}$ for $l\in[L]$,
then
$$\min_{\widetilde{\F}\in H(\gamma )}\{R(\widetilde{\F},D_x)\}\le
(1-\frac{(\gamma  c)^2 \alpha \gamma_{low}+(L-1)(\gamma  c d)^2 \gamma_{low} (\alpha+\beta-1)+\beta(d\gamma )^2}{4A+(\gamma  c)^2 \alpha \gamma_{low}+(L-1)(\gamma  c d)^2 \gamma_{low} (\alpha+\beta-1)+\beta(d\gamma )^2})R(\F,D_x).$$

Furthermore, for $\rho\in(0,1)$, if
$L > 1 + \frac{4(1-\rho) A }
{\rho((\gamma  c d)^2 \gamma_{low} (\alpha+\beta-1))}$, then
there exists an $\F_{\Theta_a}\in\H(\gamma)$ whose adversarial rate is $\ge 1-\rho$.

Furthermore, for $\tau\in(0,1)$ satisfying  $\tau<R(\F,D_x)$,
if  $L>\frac{4A(R(\F,D_x)/\tau-1)}{(\gamma  c d)^2 \gamma_{low} (\alpha+\beta-1)}+1$,
then there exists an $\F_{\Theta_a}\in\H(\gamma)$
such that $R(\F_{\Theta_a},D_x)\}\le \tau$.
\end{cor}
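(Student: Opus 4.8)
The plan is to derive every assertion of Corollary~\ref{cor-41} directly from the bound furnished by Theorem~\ref{th-4}, using only the monotonicity of a single scalar function together with a term-by-term comparison of the quantity appearing in $\rho$. Write
$$N=(\gamma c_1)^2\alpha_1\gamma_1+\sum_{i=2}^{L}(\gamma c_i d_{i-1})^2\gamma_i(\alpha_i+\beta_{i-1}-1)+\beta_L(d_L\gamma)^2$$
so that Theorem~\ref{th-4} supplies an $\F_{\Theta_a}\in\H(\gamma)$ with adversarial rate at least $\rho=N/(4A+N)$, equivalently $\min_{\widetilde{\F}\in H(\gamma)}R(\widetilde{\F},D_x)\le(1-\rho)R(\F,D_x)$. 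The key observation is that $f(t)=t/(4A+t)$ is strictly increasing on $[0,\infty)$, so any lower bound $N\ge N'$ immediately yields $\rho\ge N'/(4A+N')$ and hence $1-\rho\le 4A/(4A+N')$.

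First I would bound $N$ from below term by term. Using $c_l>c$, $d_l>d$, $\alpha_l>\alpha$, $\beta_l>\beta$, $\gamma_l>\gamma_{low}$, the first term exceeds $(\gamma c)^2\alpha\gamma_{low}$, the last exceeds $\beta(d\gamma)^2$, and each summand in the middle exceeds $(\gamma cd)^2\gamma_{low}(\alpha+\beta-1)$. Summing over the $L-1$ indices gives $N\ge N'$ with
$$N'=(\gamma c)^2\alpha\gamma_{low}+(L-1)(\gamma cd)^2\gamma_{low}(\alpha+\beta-1)+\beta(d\gamma)^2,$$
and substituting into $1-\rho\le 4A/(4A+N')=1-N'/(4A+N')$ yields the first displayed inequality of the corollary, since $R(\F,D_x)\ge0$. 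The one point requiring care --- and the only genuine obstacle --- is the sign of the factor $\alpha_i+\beta_{i-1}-1$ in the middle sum: the comparison $\alpha_i+\beta_{i-1}-1\ge\alpha+\beta-1$ preserves the direction of the inequality only when $\alpha+\beta-1>0$, so I would record the standing hypothesis $\alpha+\beta>1$ (already implicit in the later bounds on $L$, where $\alpha+\beta-1$ sits in a denominator). With it, $\alpha_i+\beta_{i-1}>\alpha+\beta$ forces each middle summand to be a product of strictly positive factors, and the term-wise bound is legitimate.

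For the two quantitative conditions on $L$ I would keep only the middle sum of $N'$, discarding the two nonnegative end terms, so that $N'\ge(L-1)(\gamma cd)^2\gamma_{low}(\alpha+\beta-1)$. To reach adversarial rate $\ge1-\rho$ it suffices that $N'/(4A+N')\ge1-\rho$, which rearranges to $N'\ge 4A(1-\rho)/\rho$; enforcing this on the retained term gives exactly $L>1+4A(1-\rho)/(\rho(\gamma cd)^2\gamma_{low}(\alpha+\beta-1))$. Similarly, to force $R(\F_{\Theta_a},D_x)\le\tau$ it suffices that $(4A/(4A+N'))R(\F,D_x)\le\tau$, i.e. $N'\ge 4A(R(\F,D_x)/\tau-1)$, and imposing this on the retained middle term yields the stated lower bound on $L$. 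Both are routine algebraic rearrangements once the monotonicity step and the sign condition are in place.
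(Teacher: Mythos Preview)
Your argument is correct and is exactly the derivation the paper leaves implicit: the corollary is stated without proof, and the intended route is precisely the monotonicity of $t\mapsto t/(4A+t)$ together with the term-by-term lower bound on the numerator from Theorem~\ref{th-4}. Your explicit flagging of the sign hypothesis $\alpha+\beta>1$ (needed so that the middle summands stay positive and the product comparison goes through, and indeed forced by the later appearance of $\alpha+\beta-1$ in a denominator) is a useful clarification that the paper omits.
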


\begin{remark}
\label{rem-441}
From Corollary \ref{cor-41}, if the depth of the DNN is sufficiently large,
then there exist adversary parameters such that the attacked network
has robustness measure as small as possible.
\end{remark}

\begin{remark}
\label{rem-442}
In practical computation, we use a finite set $T$ of samples satisfying $D_x$
and ${R}(\F,D_x)$ is approximately
computed as $\widetilde{R}(\F,T)=1/|T|\sum_{x\in T} R(\F,x)$.
Since $\F_{\Theta}$ is a continued function in $\Theta$,
if $\Theta_a$ is an adversarial parameter for $\Theta$
and  $\widetilde{R}(\F,T)$ is used as the robustness measure,
then there exists a small sphere $S_a$ with $\Theta_a$ as center such that
all parameters in $S_a$ are also adversarial parameters for $\Theta$.
\end{remark}
The above remarks show that adversary parameters are inevitable in certain sense.

\begin{remark}
In the model \eqref{eq-dnnx2}, two simplifications are made.
However, the results proved in this section can be generalized to general DNNs.
First, the bias vectors are not considered, which can be included as parts of the
weight matrices by extending the input space slightly, similar to~\cite{N2014}.
Second, it is assumed that $n_l = n$ for $l\in[L]$.
This assumption could be removed by assuming  $n = \max_{l\in[l]} n_i$.
\end{remark}

\begin{remark}
Using $R_4(\F,D_x)$, results in theorem \ref{th-4} cannot be obtained yet.
But, we will use numerical experiments to show that the result is also valid
for $R_4(\F,D_x)$.
\end{remark}

\section{Experimental results}

\subsection{The setting}
\label{sec-oma11}

We use two networks: VGG19~\cite{VGG} and Resnet56~\cite{Res}.
We write VGG19 as $\F_V$ and Resnet56 as $\F_R$,
which are trained with the adversarial training~\cite{M2017}.
The experimental results are for the CIFAR-10 dataset.

We use both the adversary accuracy in \eqref{eq-rm12}
and the approximate robust radius   in \eqref{eq-rm22} to compute the adversarial rate.
For a given data set $T$, the adversarial accuracy   defined in \eqref{eq-rm12}
can be approximately computed with  PGD~\cite{M2017} as follows
\begin{equation*}
\widetilde{R}_3(\F_\Theta,T,\epsilon)=1/|T| \sum_{x\in T} I(\widehat{\F}_{\Theta}(x')=l_x)
\end{equation*}
where $x'=\arg\max_{|\widetilde{x}-x|_\infty \le \epsilon} L_{\CE}(\F_\Theta(\widetilde{x}),l_x)$.
In the experiment, we set $\epsilon=8/255$.
The approximate robust radius in \eqref{eq-rm22} can be computed as follows
$$\widetilde{R}_4(\F,T)=\frac{1}{T}\sum_{x\in T}
\min_{l\ne l_x}\{\frac{\F_{l_x}(x)-\F_l(x)}{||\nabla\F_{l_x}(x)-\nabla\F_l(x)||_1}I(\F_{l_x}(x_0)>\F_{l}(x_0))\}$$
where the $L_1$-norm is used,
since we consider $L_\infty$ adversarial samples.

The accuracies, adversarial accuracies, and AARs of $\F_V$ and $\F_R$ under the $L_{\infty}$ norm attack are given in Table \ref{tab-mr1},
which are about the state of the art results for these DNNs.
\begin{table}[H]
\centering
\begin{tabular}{|c|c|c|c|}
    \hline
    Net & AC  & AA & AAR\\
    $\F_V$ & 80$\%$ & 45$\%$ & 0.0770\\
    $\F_R$ & 83$\%$ & 49$\%$ & 0.0194\\
    \hline
\end{tabular}
\caption{Results  for $\F_V$ and $\F_R$ on CIFAR-10. AC: accuracy, AA: adversarial accuracy, AAR: approximate accurate radius}
\label{tab-mr1}
\end{table}

\subsection{Adversarial parameter attack}
\label{sec-exp20}

Let $\Theta$ be the parameter set of $\F_V$ or $\F_R$, and  two kinds of parameter perturbation attacks will be carried out:

{\bf $L_{\infty,\gamma}$ perturbation} for $\gamma\in\R_+$:
We consider parameter perturbations in $B_{\infty}(\Theta,\Delta_\gamma)$,
where $\Delta_\gamma =(\gamma|\theta_1|,\ldots,\gamma|\theta_k|)$
for  $\Theta = (\theta_1,\ldots,\theta_k)$.
In other words, $\gamma$ is the perturbation ratio.
Also, the BN-layers will  be changed to compute this kind of perturbations.

{\bf $L_{0,k}$ perturbation} for $k\in\N_{>0}$: $k$ weight matrices are perturbed and
$\max\{400, 1\% \#W_l\}$ pairs of weights are changed for $\F_V$ with the method given in section \ref{sec-alg2} ($1\% \#W_l$ pairs of weights are changed for $\F_R$),
where $\#W_l$ is the number of entries of $W_l$.
The BN-layers will not be changed to compute this kind of perturbations.

We set  $\gamma_{\rm{low}}$ in Remark \ref{rem-12} to be $90\%$,
that is, if the accuracy of the perturbed network has
is less than $90\%$ of that of the original DNN, then the attack is considered failed.

\subsubsection{Random parameter perturbation}

We do random parameter perturbations and will use them
as bases for comparisons. The results are given in Table \ref{tab-mr15}.

\begin{table}[H]
\centering
\begin{tabular}{|l|c|c|l|}
  \hline
  Attack & AC  & AA &AR  \\
  No attack & 80$\%$ & 45$\%$&0 \\
  $L_{\infty,0.02}$ & 80$\%$ & 39$\%$ &0.13\\
  $L_{\infty,0.04}$ & 80$\%$ & 37$\%$ &0.17\\
  $L_{\infty,0.06}$ & 79$\%$ & 36$\%$ &0.2\\
  $L_{\infty,0.08}$ & 78$\%$ & 35$\%$ &0.22\\
  $L_{\infty,0.10}$ & 78$\%$ & 34$\%$ &0.24\\
  $L_{0,8}$ & 67$\%$ & 23$\%$ &0.41(fail)\\
  $L_{0,12}$ & 61$\%$ & 20$\%$ &0.42(fail)\\
  $L_{0,16}$& 57$\%$ & 19$\%$  &0.41(fail)\\
  \hline
\end{tabular}
\caption{Random perturbations for $\F_V$ . AC: accuracy, AA: adversarial accuracy, AR: adversarial rate}
\label{tab-mr15}
\end{table}

For the $L_{\infty}$ perturbations, the accuracies are kept high,
but the robustness does not decrease much, so the adversarial rates are low.
For the $L_{0}$  perturbations, the accuracy decreases too much  and are considered failed attacks.
In either case, random perturbations are not good adversarial parameters.
Thus adversarial parameters are sparse around the trained parameters, which is similar with adversarial samples~\cite{neth1}.

\begin{table}[H]
\centering
\begin{tabular}{|l|c|c|c|}
  \hline
  Attack & AC  & AA &AR  \\
  No attack & 83.1$\%$ & 48.5$\%$&0 \\
  $L_{\infty,0.02}$ & 82.9$\%$ & 48.0$\%$ &0.004\\
  $L_{\infty,0.04}$ & 82.7$\%$ & 48.0$\%$ &0.011\\
  $L_{\infty,0.06}$ & 82.3$\%$ & 47.5$\%$ &0.022\\
  $L_{\infty,0.08}$ & 81.7$\%$ & 46.6$\%$ &0.039\\
  $L_{\infty,0.10}$ & 81.0$\%$ & 45.5$\%$ &0.061\\
  $L_{0,10}$ & 81.5$\%$ & 47.8$\%$ &0.016\\
  $L_{0,20}$ & 80.7$\%$ & 45.8$\%$ &0.054\\
  $L_{0,30}$ & 80.5$\%$ & 45.7$\%$ &0.057\\
  \hline
\end{tabular}
\caption{Random perturbations for $\F_R$. AC: accuracy, AA: adversarial accuracy, AR: adversarial rate}
\label{tab-mr151}
\end{table}

Results of random perturbations for $\F_R$ are given in Table \ref{tab-mr151}.
From the results, we can see that network $\F_R$ is much more robust
against random parameter perturbations than $\F_V$.

\subsubsection{Adversarial parameter attack on $\F_V$ and $\F_{R}$}

We use algorithms in section \ref{sec-alg} to create
adversarial parameters.
The training set $T$ contains 500 samples for which $\F$ give the correct label.
The average results are given in Table \ref{tab-mr16}.

\begin{table}[H]
\centering
\begin{tabular}{|l|c|c|c|c|c|}
  \hline
  \multirow{2}{*}{Attack}&
   \multirow{2}{*}{AC}
       & \multicolumn{2}{c|}{AA in \eqref{eq-rm12}}
       & \multicolumn{2}{c|}{ARR in \eqref{eq-rm22}}  \\ \cline{3-6}
       &   & $\widetilde{R}_3(\F,T)$ &AR  &$\widetilde{R}_4(\F,T)$& AR \\ \hline
  No attack & 80$\%$ & 45$\%$&0&0.0770&0 \\
$L_{\infty,0.02}$ & 78$\%$ & 38$\%$ &0.15&0.0667&0.13\\
$L_{\infty,0.04}$ & 77$\%$ & 30$\%$ &0.32&0.0481&0.36\\
$L_{\infty,0.06}$ & 76$\%$ & 22$\%$ &0.49&0.0372&0.49\\
$L_{\infty,0.08}$ & 76$\%$ & 10$\%$ &0.74&0.0195&0.71\\
$L_{\infty,0.10}$ & 77$\%$ & 8$\%$&0.79&0.0143&0.78\\
$L_{0,8}$    & 72$\%$ & 27$\%$& 0.36 &0.0441&0.38\\
$L_{0,12}$    & 76$\%$ & 24$\%$&0.44 &0.0443&0.40\\
$L_{0,16}$  & 74$\%$ & 22$\%$&0.47 &0.0404&0.44\\
  \hline
\end{tabular}
\caption{Adversarial parameter attack for $\F_V$. AC: accuracy, AA: adversarial accuracy, AR: adversarial rate.}
\label{tab-mr16}
\end{table}

%
Comparing Tables \ref{tab-mr15} and \ref{tab-mr16},
we can see that algorithms in section \ref{sec-alg} can be used to  create
good adversarial parameters, especially for the  $L_{\infty}$ attack.
From Figure  \ref{fig-per3}, we can see that
 the adversarial rate  and adversarial accuracy for the $L_{\infty,\gamma}$ attack
are near linear in $\gamma$ when $\gamma$ is small, and is gradually stabilized with the increase of $\gamma$.
Also when $\gamma$ is very small, say $\gamma=0.02$,
the adversarial parameter attacks do not create good results,
which means the network is approximately safe against these attacks for $\gamma\le0.02$.
For $L_{0}$ attacks, we can see that the accuracies are increased lots
comparing to the random perturbation and
adversarial parameters are obtained successfully.
Also, for the two kinds of robustness-measurements, the adversarial rates   are very close.
For network $\F_R$, similar results are obtained and are given in Table \ref{tab-mr16res}.

\begin{figure}[H]
\centering
\hspace{2mm}
\includegraphics[scale=0.4]{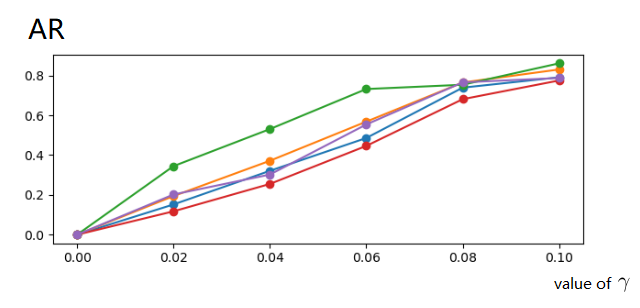}
\label{fig-per1}
%
\hspace{2mm}
\includegraphics[scale=0.4]{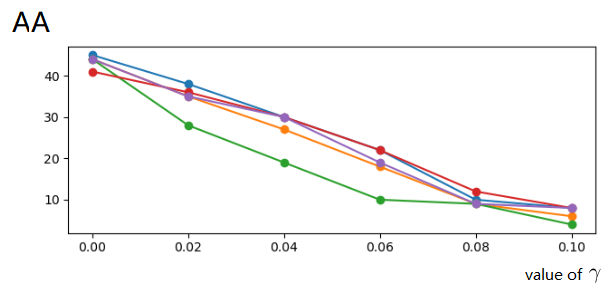}
\caption{Left: Relation between $\gamma$ and  the adversarial rate.
Right: Relation between $\gamma$ and adversarial accuracy.}
\label{fig-per3}
\end{figure}


\begin{table}[H]
\centering
\begin{tabular}{|l|c|c|c|c|c|}
  \hline
  \multirow{2}{*}{Attack}&
   \multirow{2}{*}{AC}
       & \multicolumn{2}{c|}{AA in \eqref{eq-rm12}}
       & \multicolumn{2}{c|}{ARR in \eqref{eq-rm22}}  \\ \cline{3-6}
       &   & $\widetilde{R}_3(\F,T)$ &AR  &$\widetilde{R}_4(\F,T)$& AR \\ \hline
  No attack & 83$\%$ & 49$\%$&0&0.0194&0 \\
$L_{\infty,0.02}$ & 84$\%$ & 39$\%$ &0.20&0.0151&0.22\\
$L_{\infty,0.04}$ & 85$\%$ & 27$\%$ &0.45&0.0127&0.35\\
$L_{\infty,0.06}$ & 86$\%$ & 14$\%$ &0.72&0.0092&0.53\\
$L_{\infty,0.08}$ & 87$\%$ & 6$\%$ &0.87&0.0064&0.67\\
$L_{\infty,0.10}$ & 87$\%$ & 1$\%$ &0.98&0.0044&0.77\\
{ $L_{0,10}$ }   & 80$\%$ & 26$\%$ & 0.45 &0.0193&0\\
{ $L_{0,20}$ }   & 78$\%$ & 18$\%$ & 0.59&0.0187&0.03\\
{ $L_{0,30}$}    & 72$\%$ & 9$\%$&0.71 &0.0125&0.33\\
  \hline
\end{tabular}
\caption{Adversarial parameter attack for $\F_R$. AC: accuracy, AA: adversarial accuracy, AR: adversarial rate.}
\label{tab-mr16res}
\end{table}

\subsubsection{Affect of network depth and width on the adversarial parameter attack}

We check how the network depth and width affect on the adversarial parameter attack.
We use the $L_{\infty,\gamma}$ adversarial parameter attack for $\gamma=0.02,0.04$.
Let $\F^k_V$ be the network which has the same width with $\F_V$ but has $k$ more layers, $\F_V(\alpha)$  the network which has the same depth with $\F_V$ but has $\alpha$ times  width as $\F_V$.
 The results are given in Table \ref{tab-bj1}.
We can see that when the depth becomes larger, the attack becomes easier.
This validates the results in section \ref{sec-t2},
for instance Corollary \ref{cor-41}, where it shows that when the depth
of the network becomes large, adversarial parameters exist.

The attack is much less  sensitive to the width.
The reason may be that there exist much redundancy on the width, similar to the results in \cite{ry1,ry3,ry4}, and the redundance can lead to limited search directions in the feature space and poor generalization performance, as shown in \cite{ry2}, so the attack is hard to improve when the width becomes larger.

\begin{table}[H]\small
\centering
\begin{tabular}{|c|c|c|c|c|c|c|c|c|c|}
  \hline
  \multirow{2}{*}{Network}
       & \multicolumn{2}{c|}{$\gamma=0$}
       & \multicolumn{3}{c|}{$\gamma=0.02$}
       & \multicolumn{3}{c|}{$\gamma=0.04$}  \\ \cline{2-9}
        &AC & AA  &AC  & AA  &AR  &AC  &  AA &AR \\\hline
  $\F_V$      &80$\%$  &45$\%$    &78$\%$    &38$\%$       &   0.15   & 77$\%$   &30$\%$       &0.32\\
  $\F^8_V$    &80$\%$  &47$\%$    &76$\%$    &37$\%$       &   0.20   & 78$\%$   &32$\%$       & 0.31\\
  $\F^{16}_V$ &78$\%$  &44$\%$    &74$\%$    &35$\%$       &   0.19   & 75$\%$   &27$\%$       & 0.37\\
  $\F^{24}_V$ &79$\%$  &43$\%$    &73$\%$    &30$\%$       &   0.28   & 73$\%$   &20$\%$       & 0.49\\
  $\F^{32}_V$ &76$\%$  &44$\%$    &72$\%$    &28$\%$       &   0.34   & 71$\%$   &19$\%$       & 0.53\\
  $\F_V(1.25)$&80$\%$  &42$\%$    &77$\%$    &37$\%$       &   0.12   & 76$\%$   &29$\%$       & 0.29   \\
  $\F_V(1.5)$ &81$\%$  &41$\%$    &78$\%$    &36$\%$       &    0.12  & 77$\%$   &30$\%$       & 0.26   \\
  $\F_V(2)$   &78$\%$  &44$\%$    &76$\%$    &39$\%$       &  0.11    & 74$\%$   &31$\%$       & 0.28   \\
  $\F_V(2.5)$ &80$\%$  &44$\%$    &79$\%$    &35$\%$       &    0.20  & 76$\%$   &30$\%$       & 0.30   \\
   \hline
\end{tabular}
\caption{Affect of width and depth on adversarial parameter attack for $\F_V$.
AC: accuracy, AA: adversarial accuracy, AR: adversarial rate.}
\label{tab-bj1}
\end{table}

We can  use $\widetilde{R}_4(\F,T)$ to measure the robustness and similar results are obtained, which are given in Table \ref{tab-bj}.

\begin{table}[H]\small
\centering
\begin{tabular}{|c|c|c|c|c|c|c|}
  \hline
  \multirow{2}{*}{Network}
       & \multicolumn{1}{c|}{$\gamma=0$}
       & \multicolumn{2}{c|}{$\gamma=0.02$}
       & \multicolumn{2}{c|}{$\gamma=0.04$}  \\ \cline{2-6}
       &$\widetilde{R}_4(\F,T)$ & $\widetilde{R}_4(\F,T)$ & AR  & $\widetilde{R}_4(\F,T)$   &AR  \\ \hline
  $\F_V$             & 0.0770   & 0.0667 &0.13 & 0.0481 &0.36 \\
  $\F^8_V$           & 0.0776   & 0.0686 & 0.11& 0.0534 & 0.30\\
  $\F^{16}_V$        & 0.0815   & 0.0652 &0.19 & 0.0479 & 0.40\\
  $\F^{24}_V$        & 0.0817   & 0.0630 & 0.21& 0.0388 & 0.49\\
  $\F^{32}_V$        & 0.0808   & 0.0608 & 0.23& 0.0392 & 0.48\\
  $\F_V(1.25)$       & 0.0750   & 0.0663 & 0.11& 0.0524 & 0.29\\
  $\F_V(1.5)$        & 0.0763   & 0.0670 &0.12 & 0.0563 & 0.25 \\
  $\F_V(2)$          & 0.0750   & 0.0650 & 0.13& 0.0499 & 0.32\\
  $\F_V(2.5)$        & 0.0775   & 0.0678 &0.12 & 0.0489 & 0.35\\
   \hline
\end{tabular}
\caption{Affect of width and depth on adversarial parameter attack for $\F_V$.
AR: adversarial rate}
\label{tab-bj}
\end{table}

\subsection{Direct adversarial parameters}

We give experimental results for direct adversarial parameters
introduced in section \ref{sec-def2}.
We try to decrease the accuracies for samples with label 0 and keep the
accuracies and robustness for other samples.
The experimental results are for the network $\F_V$ and CIFAR-10
and are given in Table \ref{tab-mr1ga}.

\begin{table}[H]
\centering
\begin{tabular}{|c|c|c|c|c|}
  \hline
  Attack & AC$_1$  & AA$_1$&AC$_0$ &AR  \\
  $L_{\infty,0.02}$ & 77$\%$ & 35$\%$ &11$\%$&0.65\\
  $L_{\infty,0.04}$ & 78$\%$ & 40$\%$ & 3$\%$&0.83\\
  $L_{\infty,0.06}$ & 79$\%$ & 42$\%$ & 1$\%$&0.92\\
  $L_{\infty,0.08}$ & 80$\%$ & 43$\%$ & 1$\%$&0.95\\
  $L_{\infty,0.1}$& 80$\%$ & 45$\%$ & 1$\%$&0.99\\
  \hline
\end{tabular}
\caption{Direct adversarial parameter attack for $\F_V$. AC$_1$ and AA$_1$ are for samples with label $\ne0$, AC$_0$ is the accuracy for samples with label $0$.}
\label{tab-mr1ga}
\end{table}

Comparing to Tables \ref{tab-mr1ga} and \ref{tab-mr16},
we can see that direct adversarial parameters for a given label
are much easier to compute than adversarial parameters.
High quality direct adversarial parameters
can be obtained by using perturbation ratios $6\%-10\%$.
Results for network $\F_R$ are given in Table \ref{tab-mr2ga},
from which we can see that it is slightly more difficult to attack $\F_R$.
\begin{table}[H]
\centering
\begin{tabular}{|c|c|c|c|c|}
  \hline
  Attack & AC$_1$  & AA$_1$&AC$_0$ &AR  \\
  $L_{\infty,0.02}$ & 82$\%$ & 46$\%$ &52$\%$&0.38\\
  $L_{\infty,0.04}$ & 81$\%$ & 47$\%$ & 35$\%$&0.57\\

  $L_{\infty,0.06}$ & 81$\%$ & 47$\%$ & 10$\%$&0.83\\
$L_{\infty,0.08}$ & 81$\%$ & 46$\%$ & 4$\%$&0.90\\
    $L_{\infty,0.1}$& 80$\%$ & 46$\%$ & 1$\%$&0.93\\
  \hline
\end{tabular}
\caption{Direc adversarial parameter attack for $\F_R$.  AC$_1$ and AA$_1$ are for samples with label $\ne0$, AC$_0$ is the accuracy for samples with label $0$.}
\label{tab-mr2ga}
\end{table}

\subsection{Adversarial parameters for a given sample}
We give experimental results for adversarial parameters
for a given sample introduced in section \ref{sec-def2}.
$\widetilde{R}_2(\F,x)=\min_{i\ne l_x}\{\frac{\F_{l_x}(x)-\F_i(x)}{||\nabla\F_{l_x}(x)-\nabla\F_i(x)||_1}\}$ is used to measure the robustness of $\F$ at sample $x$.
Let $S$ be a subset of the test set containing 100 samples
such that $\F$ gives the correct label for all of them
and all samples in $S$ are robust in that,
no adversarial samples were found using PGD-10 with  $L_{\infty}=\frac{8}{255}$.

\begin{table}[H]
\centering
\begin{tabular}{|c|c|c|c|c|}
  \hline
        Attack   & $\widetilde{R}_2(\F,x)$  & $N_1$& $N_2$ & AR  \\
before attack    & 0.078  &100&100&0\\
$L_{\infty,0.02}$& 0.016 & 0 &100&0.79\\
$L_{\infty,0.04}$& 0.010 & 0 &100&0.87\\
$L_{\infty,0.06}$& 0.008 & 0 &100&0.89\\
$L_{\infty,0.08}$& 0.006 & 0 &100&0.92\\
$L_{\infty,0.1}$ & 0.005 & 0 &100&0.94\\
  \hline
\end{tabular}
\caption{Adversarial parameter attack to $\F_V$ for a given sample. AR: adversarial rate}
\label{tab-wh}
\end{table}

For each sample $x\in S$, we compute $L_{\infty,\gamma}$  adversarial parameters
and the average results are given in Table \ref{tab-wh},
where $N_1$ is the number of robust samples,
and $N_2$ the number of samples which are given the correct labels.
%
Comparing to Tables \ref{tab-mr1ga}, \ref{tab-mr16}, and \ref{tab-wh},
we can see that adversarial rates for a single sample are
about the same as that for a given label.
Similar results for network $\F_R$ are given in Table \ref{tab-whres}.

\begin{table}[H]
\centering
\begin{tabular}{|c|c|c|c|c|}
  \hline
        Attack   & $\widetilde{R}_2(\F,x)$  & $N_1$& $N_2$ & AR  \\
before attack    & 0.057  &100&100&0\\
$L_{\infty,0.02}$& 0.008 & 0 &100&0.86\\
$L_{\infty,0.04}$& 0.008 & 0 &100&0.86\\
$L_{\infty,0.06}$& 0.008 & 0 &100&0.86\\
$L_{\infty,0.08}$& 0.008 & 0 &100&0.86\\
$L_{\infty,0.1}$ & 0.008 & 0 &100&0.86\\
  \hline
\end{tabular}
\caption{Adversarial parameter attack to $\F_R$ for a given sample.
AR: Adversarial Rate}
\label{tab-whres}
\end{table}

\section{Proofs for the theorems in section \ref{sec-t}}
\label{sec-p}

\subsection{Proofs of Theorems \ref{th-1} and \ref{th-2}}
\label{sec-p1}
We introduce several notations.
Let $||x||_{-\infty}=\min_{i\in[n]}\{|x_i|\}$ for $x\in\R^n$,
and $||W||_{-\infty,2}=\min_{i\in[a]}\{||W^{(i)}||_2\}$ for $W\in\R^{a\times b}$,
where $W^{(i)}$ is the $i$-th row of $W$.
If $\F$ is a network, we use $\F_i(x)$ to denote the $i$-th coordinate of $\F(x)$.
A lemma is proved first.

\begin{lemma}
\label{buc-wd}
Let $v\in\R^n$ and $v\ne 0$. Then there exists a vector $w\in\R^n$ such that
$w\bot v$, $||w||_{\infty}=1$ and $||w||_2\ge\sqrt{n-1}$.
\end{lemma}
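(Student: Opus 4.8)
The plan is to construct $w$ explicitly by fixing $n-1$ of its coordinates to $\pm 1$ and using the remaining degree of freedom to enforce orthogonality. After relabeling the coordinates I may assume $|v_n| = \max_{i\in[n]}|v_i|$; since $v\neq 0$ this gives $v_n\neq 0$. I set $w_i = s_i\in\{-1,+1\}$ for $i\in[n-1]$, with signs $s_i$ to be chosen below, and define the last coordinate by solving $\langle w, v\rangle = 0$, namely
\[
w_n = -\frac{\sum_{i=1}^{n-1} s_i v_i}{v_n}.
\]
By construction $w\bot v$. Since the first $n-1$ coordinates have absolute value $1$ and, as I will arrange, $|w_n|\le 1$, it follows that $\|w\|_\infty = 1$ and
\[
\|w\|_2^2 = (n-1) + w_n^2 \ge n-1,
\]
so $\|w\|_2\ge\sqrt{n-1}$. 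Thus the whole argument reduces to choosing the signs $s_i$ so that $|w_n|\le 1$, i.e. so that $|\sum_{i=1}^{n-1}s_i v_i|\le |v_n|$.

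The key step, which I expect to be the main obstacle, is this sign-selection claim: writing $M = |v_n| = \max_{i\in[n]}|v_i|$, there exist signs $s_i\in\{-1,+1\}$ with $|\sum_{i=1}^{n-1} s_i v_i|\le M$. I would prove it greedily, maintaining the running partial sum $S_k = \sum_{i=1}^{k} s_i v_i$ inside $[-M, M]$ by induction on $k$. Starting from $S_0 = 0$, and given $|S_{k-1}|\le M$, I choose $s_k$ so that $s_k v_k$ has sign opposite to $S_{k-1}$ (subtracting $|v_k|$ when $S_{k-1}\ge 0$, adding it when $S_{k-1}<0$); because $|v_k|\le M$ and $|S_{k-1}|\le M$, a short case check shows $S_k$ again lies in $[-M, M]$. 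Applying this up to $k=n-1$ gives $|\sum_{i=1}^{n-1} s_i v_i|\le M = |v_n|$, hence $|w_n|\le 1$, which completes the construction.

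Finally I would note that the statement implicitly requires $n\ge 2$: for $n=1$ the only vector orthogonal to $v\neq 0$ is $0$, for which $\|w\|_\infty=1$ fails, and this is the regime in which the lemma is applied later. The only genuinely nontrivial ingredient is the greedy sign lemma; the two norm estimates are then immediate verifications.
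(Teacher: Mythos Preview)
Your proof is correct and follows the same high-level template as the paper's: fix $n-1$ coordinates of $w$ to $\pm 1$ and use the remaining coordinate to enforce $w\perp v$, then check that this last coordinate has modulus at most $1$. The genuine difference is in how the sign-balancing is established. You single out the coordinate of maximum modulus $M=|v_n|$ and run a greedy argument keeping all partial sums of $\sum s_i v_i$ inside $[-M,M]$; the paper instead takes an optimal bipartition $S\subset[n]$ minimizing $\bigl|\sum_{i\in S}|v_i|-\sum_{i\notin S}|v_i|\bigr|$ and shows that the minimum imbalance $k$ is at most $|v_j|$ for every nonzero $v_j\in S$, then adjusts that $w_j$. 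Your greedy argument is a bit more elementary and fully constructive, and it pins the adjusted coordinate to the position of the largest $|v_i|$; the paper's extremal argument does not need to identify the maximum and allows the adjusted coordinate to sit at \emph{any} nonzero entry of the heavier side. Either route yields $\|w\|_\infty=1$ and $\|w\|_2^2\ge n-1$ immediately. Your remark that the lemma tacitly needs $n\ge 2$ is also apt.
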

\begin{proof}
Let $S=\arg\min_{S\subseteqq [n]}\{|\sum_{i\in S} |v_i|-\sum_{i\in [n]\setminus S} |v_i||\}$.
We can assume   $\sum_{i\in S} |v_i|-\sum_{i\in [n]\setminus S} |v_i|=k\ge0$.
For any $j\in S$ such that $v_j\ne0$ and $S_1=S/\{j\}$,
we have $|\sum_{i\in S_1} |v_i|-\sum_{i\in [n]/S_1} |v_i|| = |2|v_j|-k|\ge k$,
which means $k\le |v_j|$.

We now define $w\in\R^n$.
Set $w_i=1$ if $v_i=0$.
Select a $j\in S$ such that $v_j\ne0$ and let $w_i=\sign(v_i)$ if $i\in S/\{j\}$ and $v_i\ne0$, and $w_j=\frac{-\sum_{i\in S} |v_i|+\sum_{i\in [n]\setminus S} |v_i|+|v_j|}{v_j}$.
For $i\in[n]\setminus S$, let $w_i=-\sign(v_i)$ if $v_i\ne0$.
It is easy to check that  $||w||_{\infty}=1$, $||w||_2\ge\sqrt{n-1}$, and $w\bot v$.
The lemma is proved.
\end{proof}

%
%
%
%
We now prove Theorem \ref{th-1}.
\begin{proof}
By Lemma \ref{buc-wd}, there exists a vector $v\in\R^n$ such that $v\bot x_0$, $||v||_2\ge\sqrt{n-1}$ and $||v||_{\infty}=1$.
Moreover, we can assume that at least $\eta n_1/2$ coordinates of $\Relu(W_1(x+\epsilon v)+b_1)$ are bigger than $b $.
If this is not valid, we just need to change $v$ to $-v$,
and then
$\Relu(W_1(x+\epsilon v)+b_1)+\Relu(W_1(x-\epsilon v)+b_1)\ge2\Relu(W_1x+b_1)$,
since $\Relu(x)+\Relu(y)\ge \Relu(x+y)$ for all $x,y\in\R$.
By condition $C_3$, at least $\eta n_1$ coordinates of $2\Relu(W_1x+b_1)$ are bigger than $2b $, but fewer than $\eta n_1/2$ coordinates of $\Relu(W_1(x+\epsilon v)+b_1)$ are bigger than $b $, so  at least $\eta n_1/2$ coordinates of $\Relu(W_1(x-\epsilon v)+b_1)$ are bigger than $b $.

Let $l_2\in[m]$ such that $l_2\ne l_{x_0}$,
$\overline{W}_2=-(W_2^{(l_{x_0})}-W_2^{(l_2)})\in\R^{1\times n_1}$,
$U_v\in\R^{n_1\times n}$ all of whose rows are $\gamma v^\tau$ (the transposition of $v$), and $U=\diag(\sign(\overline{W}_2))\in\R^{n_1\times n_1}$.
Let $\widetilde{W}_1=W_1+UU_v$ and
$$\widetilde{\F}(x)=W_2\Relu(\widetilde{W}_1x+b_1)+b_2.$$
We will show that $\widetilde{\F}$ satisfies the condition of the theorem.

Since $v\bot x_0$, we have $\widetilde{\F}(x_0)=\F(x_0)$ and $\widetilde{\F}$ gives the correct label for $x_0$.
Since $||v||_{\infty}=1$, we have $||\widetilde{W}_1-W_1||_{\infty}=||UU_v||_{\infty}=||U_v||_{\infty}=||\gamma v||_{\infty}\le \gamma$, and thus $\widetilde{\F}(x)\in H_{\gamma}(\Theta)$.

So it suffices to show that  $\widetilde{\F}(x_0+\epsilon v)$ will not give $x_0+\epsilon v$ label $l_{x_0}$, which means that $\widetilde{\F}$ has adversarial samples to $x_0$ in $S_{\infty}(x_0,\epsilon)$.
Since
\begin{equation*}
\begin{array}{ll}
&\widetilde{\F}(x_0+\epsilon v)\\
=&W_2\Relu(\widetilde{W}_1(x_0+\epsilon v)+b_1)+b_2\\
=&W_2\Relu(W_1(x_0+\epsilon v)+b_1+\epsilon UU_vv)+b_2\\
\end{array}
\end{equation*}
we have
\begin{equation*}
\begin{array}{ll}
&\widetilde{\F}_{l_{x_0}}(x_0+\epsilon v)-\widetilde{\F}_{l_2}(x_0+\epsilon v)\\
=&\F_{l_{x_0}}(x_0+\epsilon v)-\F_{l_2}(x_0+\epsilon v)+\\
&\overline{W}_2(\Relu(W_1(x+\epsilon v)+b_1)-\Relu(W_1(x+\epsilon v)+b_1+\epsilon UU_vv)).\\
\end{array}
\end{equation*}
Since $e(\Relu(f)-\Relu(e+f))=-|f|(|\Relu(e)-\Relu(e+f))|$ for all $e,f\in\R$, we have
\begin{equation*}
\begin{array}{ll}
&-\overline{W}_2(\Relu(W_1(x+\epsilon v)+b_1)-\Relu(W_1(x+\epsilon v)+b_1+\epsilon UU_vv))\\
=&|\overline{W}_2|(|\Relu(W_1(x+\epsilon v)+b_1)-\Relu(W_1(x+\epsilon v)+b_1+\epsilon UU_vv)|).\\
\end{array}
\end{equation*}
Since  $\epsilon UU_vv=\epsilon\gamma ||v||_2^2\sign(\overline{W}_2)$ and $||v||_2\ge n-1$,  each weight of $|\epsilon UU_vv|$ is at least $\epsilon\gamma (n-1)$.
Note that if $e>0$ and $f\in\R$, then $|\Relu(e)-\Relu(e-f)|\ge \min\{e,|f|\}$.
As a consequence,
 if $i$ satisfies $(\Relu(W_1(x+\epsilon v)+b_1))_i>b $, then $(|\Relu(W_1(x+\epsilon v)+b_1)-\Relu(W_1(x+\epsilon v)+b_1+\epsilon UU_vv)|)_i\ge \min\{\epsilon \gamma (n-1),b \}$. %
Since at least $\eta n_1/2$ coordinates of $\Relu(W_1(x+\epsilon v)+b_1)$ are bigger than $b $,
  we have $||\Relu(W_1(x+\epsilon v)+b_1)-\Relu(W_1(x+\epsilon v)+b_1+\epsilon UU_vv)||_1\ge \eta n_1/2\min\{\epsilon \gamma (n-1),b \}$.
By condition $C_2$, it is easy see
\begin{equation*}
\begin{array}{ll}
&-\overline{W}_2(\Relu(W_1(x+\epsilon v)+b_1)-\Relu(W_1(x+\epsilon v)+b_1+\epsilon UU_vv))\\
=&|\overline{W}_2|(|\Relu(W_1(x+\epsilon v)+b_1)-\Relu(W_1(x+\epsilon v)+b_1+\epsilon UU_vv)|)\\
\ge& ||\overline{W}_2||_{-\infty}||\Relu(W_1(x+\epsilon v)+b_1)-\Relu(W_1(x+\epsilon v)+b_1+\epsilon UU_vv)||_1\\
\ge&\min\{\epsilon \gamma (n-1),b \} c n_1 \eta/2,
\end{array}
\end{equation*}
that is, $\overline{W}_2(\Relu(W_1(x+\epsilon v)+b_1)-\Relu(W_1(x+\epsilon v)+b_1+\epsilon UU_vv)) \le -\min\{\epsilon \gamma (n-1),b \} c n_1 \eta/2$.
By condition $C_1$, we have $\F_{l_{x_0}}(x_0+\epsilon v)-\F_{l_2}(x_0+\epsilon v)\le A$.
Then we have
\begin{equation*}
\begin{array}{ll}
&\widetilde{\F}_{l_{x_0}}(x_0+\epsilon v)-\widetilde{\F}_{l_2}(x_0+\epsilon v)\\
=&\F_{l_{x_0}}(x_0+\epsilon v)-\F_{l_2}(x_0+\epsilon v)+\\
&\overline{W}_2(\Relu(W_1(x+\epsilon v)+b_1)-\Relu(W_1(x+\epsilon v)+b_1+\epsilon UU_vv))\\
\le& A-\min\{\epsilon \gamma (n-1),b \} c n_1 \eta/2
<0.
\end{array}
\end{equation*}
Thus
if $n_1>\frac{2A}{\min\{\epsilon \gamma  (n-1), b \} c \eta }$,
then
$\widetilde{\F}_{l_{x_0}}(x_0+\epsilon v)-\widetilde{\F}_{l_2}(x_0+\epsilon v)<0$
and the label of $\widetilde{\F}(x_0+\epsilon v)$ is not $l_{x_0}$.
The theorem is proved.
\end{proof}

%
%
%
%
%
%
We now prove Theorem \ref{th-2}.
\begin{proof}
By condition $C_4$, for $l\in[m]$, there exist $v_l\in\R^n$ such that $v_l\bot S$, $v_l\bot v_k$ for $l\ne k$, $||v_l||_2=1$.
{Then  $||v_l||_{\infty}\le 1$.}

By condition $C_3$,  at least $\eta n_1/2$ coordinates of $\Relu(W_1(x+\epsilon v_{l_x})+b_1)$ are bigger than $b$
or at least $\eta n_1/2$ coordinates of $\Relu(W_1(x-\epsilon v_{l_x})+b_1)$ are bigger than $b$, similar to the proof of Theorem \ref{th-1}.

For convenience, we write $G(x,y):(\R^n,\R)\to \R$ as $G(x,y)=||\sign(x-yI_n)||_0$,
where $I_n$ is the vector with entries 1. It is easy to see that, $G(x,b)$ is the number of coordinates of $x$ that are bigger than $b$.
So we have $G(\Relu(W_1(x+\epsilon v_{l_x})+b_1),b )\ge\eta n_1/2$
 or $G(\Relu(W_1(x-\epsilon v_{l_x})+b_1),b )\ge\eta n_1/2$ for all $x$, and hence for $l\in[m]$,  we have
{\small
$$P_{x\sim D_x}(G(\Relu(W_1(x+\epsilon v_{l_x})+b_1),b )\ge\eta n_1/2
\hbox{ or } G(\Relu(W_1(x-\epsilon v_{l_x})+b_1),b )\ge\eta n_1/2 \,|\,l_x=l)=1.$$
}
For events $e$ and $f$, $P(e\ \hbox{or}\ f)\le P(e)+P(f)$. We thus have
$P_{x\sim D_x}(G(\Relu(W_1(x+\epsilon v_{l_x})+b_1),b )\ge\eta n_1/2\,|\,l_x=l)\ge0.5$
or
$P_{x\sim D_x}(G(\Relu(W_1(x-\epsilon v_{l_x})+b_1),b )\ge\eta n_1/2\,|\,l_x=l)\ge0.5$.
Without loss of generality,
we can assume that for any $l\in[m]$,
$P_{x\sim D_x}(G(\Relu(W_1(x+\epsilon v_{l_x})+b_1),b )\ge\eta n_1/2\,|\,l_x=l)\ge0.5$.
Therefore,
\begin{equation*}
\begin{array}{ll}
&P_{x\sim D_x}(G(\Relu(W_1(x+\epsilon v_{l_x})+b_1),b )\ge\eta n_1/2)\\
=&\sum_{l\in[m]}P_{x\sim D_x}(l_x=l)P_{x\sim D_x}(G(\Relu(W_1(x+\epsilon v_{l_x})+b_1),b )\ge\eta n_1/2\,|\,l_x=l)\\
\ge&0.5\sum_{l\in[m]}P_{x\sim D_x}(l_x=l) \\
=&0.5.
\end{array}
\end{equation*}

For $l\in[m]$, let $\overline{W}_2^{(l)}=W_2^{(l)}-W_2^{(l+1)}$, where $W_2^{(m+1)}=W_2^{(1)}$.
Now assume $U^l_v\in\R^{n_1\times n}$, whose rows are all $\gamma v_l$, and $U_l=\diag(\sign(\overline{W}_2^{(l)}))$.
Let $\widetilde{W}_1=W_1+\frac{1}{m}\sum_{l=1}^{m}U_lU^l_v$, and
$$\widetilde{\F}(x)=W_2\Relu(\widetilde{W}_1x+b_1)+b_2.$$
We will show that $\widetilde{\F}(x)$ satisfies the conditions of the theorem.

It is easy to see that $\widetilde{\F}$ is in $\H_{\gamma}(\Theta)$.
For any $x\in S$,  $\widetilde{W}_1x=W_1x+\frac{1}{m}\sum_{l=1}^{m}(U_lU^l_v)x=W_1x$, which means $\widetilde{\F}(x)=\F(x)$ and
the accuracy of $\widetilde{\F}$ over $D_x$ is equal to that of $\F$.

Let $x\in S$ satisfy  $G(\Relu(W_1(x+\epsilon v_{l_x})+b_1),b )>\eta n_1/2$
and $l_2\ne l_x$. By conditions $C_1$ and $C_2$ and similar to the proof of Theorem \ref{th-1}, we have
\begin{equation*}
\begin{array}{ll}
&\widetilde{\F}_{l_x}(x+\epsilon v_{l_x})-\widetilde{\F}_{l_2}(x+\epsilon v_{l_x})\\
=&\F_{l_x}(x+\epsilon v_{l_x})-\F_{l_2}(x+\epsilon v_{l_x})+\\
&W^{(l_x)}_c(\Relu(W_1(x+\epsilon v_{l_x})+b_1)-\Relu(W_1(x+\epsilon v_{l_x})+b_1+\epsilon U_lU^l_vv/m))\\
\le& A-\min\{\epsilon \gamma /m,b \} c n_1 \eta/2
<0.
\end{array}
\end{equation*}
Thus $\widetilde{\F}$ does not give label $l_x$ to $x+\epsilon v_{l_x}$
and $\widetilde{\F}$ has an adversarial sample of $x$ in $S_{\infty}(x,\epsilon)$.
Furthermore, since $P_{x\sim D_x}(G(\Relu(W_1(x+\epsilon v_{l_x})+b_1),b )\ge\gamma n_1/2)>0.5$,
  we have
  $$P_{x\sim D_x}(\widetilde{\F}\ has\ an\ adversarial\ sample\ of\ x\ in\ S_{\infty}(x_,\epsilon))>0.5.$$
  The theorem is proved.
\end{proof}

 \subsection{Proofs of Theorems \ref{th-3} and \ref{th-4}}
\label{sec-p2}

We first prove two lemmas.
\begin{lemma}
\label{m2}
For $l\in[m]$, let $S_l$ be a non-empty bounded closed subset of $\R^{n\times n}$
such that $W\in S_l$ implies $-W\in S_l$.
Also let $S_0$ be a non-empty bounded closed subset of $\R^{1\times n}$ such that $x\in S_0$ implies $-x\in S_0$.
Let $U_0\in\R^{1\times n}$ and $U_l\in\R^{n\times n}$ for $l\in[m]$.
Define maps:
$T_0(x):\R^{1\times n}\to\R^{1\times n}$ by $T_0(x)=x\prod_{l=1}^{m}U_l$,
and for $l\in[m]$, $T_l(W):\R^{n\times n}\to\R^{1\times n}$ by $T_l(W)=U_0(\prod_{j=1}^{l-1}U_j)W(\prod_{j=l+1}^{m}U_j)$.
 Then
$$\max_{x_l\in S_l, \forall 0\le l\le m}\{||\prod_{l=0}^{m}(x_l+U_l)||_2^2\}\ge||\prod_{l=0}^{m} U_l||^2_2+\sum_{l=0}^{m}\max_{x_l\in S_l}\{||T_l(x_l)||_2^2\}$$
\end{lemma}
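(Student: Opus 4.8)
The plan is to exploit the origin-symmetry of the sets $S_l$ via a Rademacher (random-sign) averaging argument combined with an expansion of the ordered product. First I would settle existence of the maximizers on the right-hand side: since each $S_l$ is bounded and closed, hence compact, and each $T_l$ is linear and therefore continuous, the maximum $\max_{x_l\in S_l}\{||T_l(x_l)||_2^2\}$ is attained at some $x_l^*\in S_l$. I fix such maximizers $x_0^*,\ldots,x_m^*$ once and for all.

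The core algebraic step is to distribute the ordered product $\prod_{l=0}^m(x_l+U_l)$ into a sum over subsets. Each subset $S\subseteq\{0,1,\ldots,m\}$ contributes a term $A_S$, namely the ordered product that places $x_l$ in position $l$ when $l\in S$ and $U_l$ otherwise. Two families of these terms are exactly the quantities on the right-hand side: the empty set gives $A_\emptyset=\prod_{l=0}^m U_l$, and each singleton $S=\{l\}$ gives, directly from the definitions of $T_0$ and $T_l$, the term $A_{\{l\}}=T_l(x_l)$.

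Next I introduce signs. For $\sigma=(\sigma_0,\ldots,\sigma_m)\in\{-1,+1\}^{m+1}$ set $P(\sigma)=\prod_{l=0}^m(\sigma_l x_l^*+U_l)$. By the symmetry hypothesis $-x_l^*\in S_l$, so every $\sigma_l x_l^*$ lies in $S_l$, and hence each $P(\sigma)$ is one of the products over which the left-hand maximum is taken; thus $\max_{x_l\in S_l}||\prod_{l=0}^m(x_l+U_l)||_2^2\ge \frac{1}{2^{m+1}}\sum_\sigma||P(\sigma)||_2^2$, the maximum dominating the average. Writing $P(\sigma)=\sum_S(\prod_{l\in S}\sigma_l)A_S^*$, where $A_S^*$ denotes $A_S$ evaluated at $x_l=x_l^*$, and expanding $||P(\sigma)||_2^2$ as a double sum over $S,S'$, the average $\frac{1}{2^{m+1}}\sum_\sigma(\prod_{l\in S}\sigma_l)(\prod_{l\in S'}\sigma_l)$ factors coordinatewise and equals $1$ when $S=S'$ and $0$ otherwise. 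This annihilates every cross term and leaves $\frac{1}{2^{m+1}}\sum_\sigma||P(\sigma)||_2^2=\sum_S||A_S^*||_2^2$.

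Finally I discard every term with $|S|\ge 2$, each of which is nonnegative, keeping only $S=\emptyset$ and the singletons; this yields $\sum_S||A_S^*||_2^2\ge||\prod_{l=0}^m U_l||_2^2+\sum_{l=0}^m||T_l(x_l^*)||_2^2$, which by the choice of the $x_l^*$ is precisely the asserted bound. I expect the main obstacle to be bookkeeping rather than conceptual: because matrix multiplication is non-commutative, one must track the ordering in the expansion carefully so that the sign attached to $A_S$ factors cleanly as $\prod_{l\in S}\sigma_l$ and so that the singleton terms genuinely coincide with $T_l(x_l^*)$. Once this sign factorization is verified, the Rademacher orthogonality computation and the final truncation are routine.
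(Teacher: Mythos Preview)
Your proposal is correct and is essentially the same argument as the paper's: fix maximizers $x_l^*$ (the paper calls them $u_l$), restrict to the sign choices $\{x_l^*,-x_l^*\}$, bound the maximum below by the average over all $2^{m+1}$ sign patterns, use Rademacher orthogonality to collapse the average into $\sum_S\|A_S^*\|_2^2$, and then discard the terms with $|S|\ge 2$. The paper's write-up is terser and conflates a couple of inequalities with equalities, but the underlying mechanism is identical to yours.
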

\begin{proof}
For  $l\in[m]$, let $u_l=\arg\max_{x\in S_l}||T_l(x)||_2^2$, which exists because $S_l$ is bounded and closed. Then
\begin{equation*}
\begin{array}{ll}
&\max_{x_l\in S_l, \forall 0\le l\le m}\{||\prod_{l=0}^{m}(x_l+U_l)||_2^2\}\\
\ge&\max_{x_l\in \{u_l,-u_l\}, \forall 0\le l\le m}\{||\prod_{l=0}^{m}(x_l+U_l)||_2^2\}\\
=&\frac{1}{2^{m+1}}\sum_{x_l\in\{u_l,-u_l\}, \forall 0\le l\le m}||\prod_{l=0}^{m}(x_l+U_l)||_2^2\\
=&\sum_{M_l\in\{u_l,U_l\}, \forall 0\le l\le m} ||\prod_{l=0}^{m}M_l||_2^2\\
\ge&||\prod_{l=0}^{m} U_l||_2+\sum_{l=0}^{m}||T_l(u_l)||_2^2\\
=&||\prod_{l=0}^{m} U_l||_2+\sum_{l=0}^{m}\max_{x_l\in S_l}\{||T_l(x_l)||_2^2\}
\end{array}
\end{equation*}
The lemma is proved.
\end{proof}

\begin{lemma}
\label{m2-b}
For $l\in[m]$, let $S_l$ be a non-empty closed subset of bounded functions from $\I ^k$ to $R^{n\times n}$ such that  $c(x)\in S_l$ implies  $-c(x)\in S_l$.
Also let $S_0$ be a non-empty closed subset of bounded functions from $\I ^k$ to $\R^{1\times n}$ such that $c(x)\in S_0$ implies $-c(x)\in S_0$.
Assume $U_0(x):\I ^k\to\R^{1\times n}$, $U_l(x):\I ^k\to\R^{n\times n}$ for $l\in[m]$.
Define maps:
$T_0(c,x):(S_0,\I ^k)\to\R^{1\times n}$ by $T_0(c,x)=c(x)\prod_{l=1}^{m}U_l(x)$;
and  $T_l(c,x):(S_l,\I ^k)\to\R^{1\times n}$  by $T_l(x)=(\prod_{j=0}^{l-1}U_j(x))c(x)(\prod_{j=l+1}^{m}U_j(x))$ for $l\in[m]$.
Then we have
{\small
$$\max_{c_l(x)\in S_l, \forall 0\le l\le m}
\displaystyle\{\int_{x\sim D_x}||\prod_{l=0}^{m}(c_l(x)+U_l(x))||_2^2\displaystyle\}
\ge
\int_{x\sim D_x}||\prod_{l=0}^{m} U_l(x)||^2_2+\sum_{l=0}^{m}\max_{c_l(x)\in S_l}\{\int_{x\sim {D_x}}||T_l(c_l,x)||_2^2\}.$$
}
\end{lemma}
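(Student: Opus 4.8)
The plan is to reduce this integral statement to a pointwise application of the algebraic identity already used in the proof of Lemma \ref{m2}, taking care that the maximizing functions are selected for the \emph{integrated} objective rather than pointwise. First I would fix, for each $l$ with $0\le l\le m$, a function $u_l\in S_l$ attaining
$$\int_{x\sim D_x}\|T_l(u_l,x)\|_2^2\,\d x=\max_{c_l\in S_l}\int_{x\sim D_x}\|T_l(c_l,x)\|_2^2\,\d x,$$
which exists because each $S_l$ is a closed family of uniformly bounded functions (if attainment is delicate one may instead take a near-maximizer and let the gap tend to $0$ at the end). Since $u_l\in S_l$ implies $-u_l\in S_l$ by the symmetry hypothesis, the two-point family $\{u_l,-u_l\}$ lies in the feasible set, so the outer maximum is bounded below by the average of the objective over all $2^{m+1}$ sign patterns:
$$\max_{c_l\in S_l}\int_{x\sim D_x}\Big\|\prod_{l=0}^m (c_l(x)+U_l(x))\Big\|_2^2\,\d x \ge \frac{1}{2^{m+1}}\sum_{\varepsilon\in\{\pm1\}^{m+1}}\int_{x\sim D_x}\Big\|\prod_{l=0}^m(\varepsilon_l u_l(x)+U_l(x))\Big\|_2^2\,\d x.$$

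Next I would interchange the finite sum with the integral and argue pointwise in $x$. Expanding each product as $\prod_l(\varepsilon_l u_l(x)+U_l(x))=\sum_{A\subseteq\{0,\dots,m\}}\varepsilon_A P_A(x)$, where $\varepsilon_A=\prod_{l\in A}\varepsilon_l$ and $P_A(x)$ is the monomial carrying the factor $u_l(x)$ for $l\in A$ and $U_l(x)$ otherwise, the Rademacher orthogonality relation $\frac{1}{2^{m+1}}\sum_\varepsilon \varepsilon_A\varepsilon_B=I(A=B)$ gives, exactly as in Lemma \ref{m2},
$$\frac{1}{2^{m+1}}\sum_{\varepsilon}\Big\|\prod_{l=0}^m(\varepsilon_l u_l(x)+U_l(x))\Big\|_2^2=\sum_{A\subseteq\{0,\dots,m\}}\|P_A(x)\|_2^2.$$
Retaining only $A=\emptyset$ (for which $P_\emptyset(x)=\prod_l U_l(x)$) and the singletons (for which $P_{\{l\}}(x)=T_l(u_l,x)$, using the definition of $T_l$) and discarding the remaining nonnegative monomial terms yields the pointwise lower bound $\|\prod_l U_l(x)\|_2^2+\sum_{l=0}^m\|T_l(u_l,x)\|_2^2$.

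Finally I would integrate this bound over $x\sim D_x$ and invoke the choice of $u_l$ to replace $\int\|T_l(u_l,x)\|_2^2\,\d x$ by $\max_{c_l\in S_l}\int\|T_l(c_l,x)\|_2^2\,\d x$, producing exactly the claimed right-hand side. I expect the only genuine obstacle to be this order-of-operations point: the maximizers $u_l$ must be chosen optimal for the integrated quantity, so that after the sign-average and the term-by-term discarding (both valid pointwise) the surviving singleton contributions still integrate to the per-$l$ maxima. The algebraic identity and the elimination of the higher-order monomials are identical to the finite-dimensional case and need no new idea; establishing existence of the maximizer $u_l$ (or arguing via near-maximizers and a limit) is the one step requiring a short topological remark.
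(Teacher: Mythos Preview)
Your proposal is correct and follows essentially the same route as the paper: pick an (integrated) maximizer $u_l\in S_l$ for each $l$, bound the full maximum below by the average over the $2^{m+1}$ sign patterns in $\{\pm u_l\}$, expand pointwise via the Rademacher orthogonality identity exactly as in Lemma~\ref{m2}, discard all but the empty-set and singleton monomials, and integrate. The paper's proof is terser (it writes the expansion equality without rejustifying the orthogonality step), but the structure and the key idea are identical; your discussion of why the maximizers must be chosen for the integrated objective and your remark about near-maximizers are sound refinements rather than a different method.
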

\begin{proof}
Denote $\overline{c_l}(x)=\arg\max_{c\in S_l}||\int_{x\sim D_x}T_l(c,x)||_2^2$,
which must exist because $S_l$ is closed and its elements are bounded. Then
\begin{equation*}
\begin{array}{ll}
&\max_{c_l(x)\in S_l, \forall 0\le l\le m}\{\int_{x\sim D_x}||\prod_{l=0}^{m}(c_l(x)+U_l(x))||_2^2\}\\
\ge&\max_{c_l(x)\in \{\overline{c_l}(x),-\overline{c_l}(x)\}, \forall 0\le l\le m}\{\int_{x\sim D_x}||\prod_{l=0}^{m}(c_l(x)+U_l(x))||_2^2\}\\
\ge&\frac{1}{2^{m+1}}\sum_{c_l\in\{\overline{c_l}(x),-\overline{c_l}(x)\}, \forall 0\le l\le m}\int_{x\sim D_x}||\prod_{l=0}^{m}(c_l(x)+U_l(x))||_2^2\\
=&\sum_{M_l(x)\in\{\overline{c_l}(x),U_l(x)\}, \forall 0\le l\le m} ||\int_{x\sim D_x}\prod_{l=0}^{m}M_l(x)||_2^2\\
\ge&\int_{x\sim D_x}||\prod_{l=0}^{m} U_l(x)||^2_2+\sum_{l=0}^{m}\int_{x\sim D_x}||T_l(\overline{c_l},x)||_2^2\\
=&\int_{x\sim D_x}||\prod_{l=0}^{m} U_l||_2+\sum_{l=0}^{m}\max_{c_l\in S_l}\{\int_{x\sim D_x}||T_l(c_l,x)||_2^2\}.
\end{array}
\end{equation*}
The lemma is proved.
\end{proof}

%
%
%
%
%
%
We now prove Theorem \ref{th-3}.
\begin{proof}
Let $S_l$  be the set of $Q\in\R^{n\times n}$ satisfying $||Q||_{\infty}\le\gamma $ and $Q\F^{l-1}(x_0)=0$ for $l\in[L+1]$.
Note that $Q$ satisfies $n(l+1)$ linear equations and has $n^2$ variables.
Since  $n\gg l$ in DNNs, we can assume that $S_l$ is not empty.
Let $\FB$ be the set of networks $\widetilde{\F}:\R^n\to\R^m$ satisfying
$$\widetilde{\F}(x)=\widetilde{W}_{L+1}\sigma(\widetilde{W}_{L}
  \sigma(\widetilde{W}_{L-1}\sigma(\dots \sigma(\widetilde{W}_{1}x))))$$
where $\widetilde{W}_l-W_l\in S_l$ for all $l\in[L+1]$.
We have
$\widetilde{\F}^l(x_0)=\F^l(x_0)$ for $l\in[L+1]$. So $\widetilde{\F}(x_0) = l_{x_0}$ if $\widetilde{\F}\in \FB$.
It is also easy to see that $\FB\subset \H_{\gamma}(\Theta)$.
So it suffices to prove
$$\min_{\widetilde{\F}\in \FB}\{\frac{R(\widetilde{\F},x_0)}{R(\F,x_0)}\}\le1-\frac{\gamma ^2((L-1)(\sin(r) c b )^2+c^2+(2\sin(r) b )^2)}{4A+ \gamma ^2((L-1)(\sin(r) c b )^2+c^2+(2\sin(r) b )^2)}.$$

Let $l_2=\arg\min_{i\ne l_{x_0}}\{\frac{|\F_{l_{x_0}}(x_0)-\F_{i}(x_0)|^2}{||\nabla(\F_{l_{x_0}}(x_0))-\nabla(\F_i(x_0))||_2^2}I(\F_{l_{x_0}}(x_0)>\F_{i}(x_0))\}$. Then
$$R(\F,x_0)=\frac{|\F_{l_{x_0}}(x_0)-\F_{l_2}(x_0)|^2}{||\nabla(\F_{l_{x_0}}(x_0))-\nabla(\F_{l_2}(x_0))||_2^2}.$$ Then for all $\widetilde{\F}\in \FB$, we have
$$R(\widetilde{\F},x_0)\le\frac{|\widetilde{\F}_{l_x}(x_0)-\widetilde{\F}_{l_2}(x_0)|^2}{||\nabla(\widetilde{\F}_{l_x}(x_0))-\nabla(\widetilde{\F}_{l_2}(x_0))||_2^2}
=\frac{|\F_{l_x}(x_0)-\F_{l_2}(x_0)|^2}{||\nabla(\widetilde{\F}_{l_x}(x_0))-\nabla(\widetilde{\F}_{l_2}(x_0))||_2^2}$$
So we have
\begin{equation}
\label{eq-pr31}
\renewcommand{\arraystretch}{1.5}
\begin{array}{ll}
&\min_{\widetilde{\F}\in \FB}\{\frac{R(\widetilde{\F},x_0)}{R(\F,x_0)}\}\\
\le&\min_{\widetilde{\F}\in\FB}\{\frac{|\F_{l_x}(x_0)-\F_{l_2}(x_0)|^2}{||\nabla(\widetilde{\F}_{l_x}(x_0))-\nabla(\widetilde{\F}_{l_2}(x_0))||_2^2}/\frac{|\F_{l_x}(x)-\F_{l_2}(x)|^2}{||\nabla(\F_{l_x}(x))-\nabla(\F_{l_2}(x))||_2^2}\}\\
\le&\min_{\widetilde{\F}\in\FB}\{\frac{||\nabla(\F_{l_x}(x_0))-\nabla(\F_{l_2}(x_0))||_2^2}{||\nabla(\widetilde{\F}_{l_x}(x_0))-\nabla(\widetilde{\F}_{l_2}(x_0))||_2^2}\}.
\end{array}
\end{equation}

To prove the theorem, we will first find a lower bound for $\max_{\widetilde{\F}\in\FB}\{||\nabla(\widetilde{\F}_{l_x}(x_0))-\nabla(\widetilde{\F}_{l_2}(x_0))||_2\}$.
Let $J_{l}(x)=\diag(\sign(\F^l(x))):\ \R^n\to\R^{n\times n}$ for $l\in[L]$.
Then  
$$\frac{\nabla\F_i(x)}{\nabla x}=W^{i}_{L+1}(J_{L}(x)W_L)(J_{L-1}(x)W_{L-1})\dots(J_{1}(x)W_1).$$
Let $A_l(x)=\frac{\nabla \F^L(t)}{\nabla \F^l(t)}|_{t=x}$ and $B_l(x)=\frac{\nabla \F^l(t)}{\nabla t}|_{t=x}$ for $l\in[L]$. Then we have
\begin{equation*}
\label{eq-pr3111}
\begin{array}{ll}
A_l(x)=(J_{L}(x)W_L)(J_{L-1}(x)W_{L-1})\dots(J_{l+1}(x)W_{l+1})\\ B_l(x)=(J_{l}(x)W_l)(J_{l-1}(x)W_{l-1})\dots(J_{1}(x)W_1).
\end{array}
\end{equation*}
Let $A_l=A_l(x_0)$, $B_l=B_l(x_0)$, $J_l=J_l(x_0)$.
Then for all $\widetilde{\F}\in \FB$,
$$\frac{\nabla \widetilde{\F}_i(x)}{\nabla x}|_{x=x_0}=\widetilde{W}_{L+1}^i(J_{L}\widetilde{W}_L)(J_{L-1}\widetilde{W}_{L-1})
\dots(J_{1}\widetilde{W}_1).$$
Denote $\overline{W}_l=\widetilde{W}_l-W_l\in S_l$ for $l\in[L+1]$. We have
\begin{equation*}
\begin{array}{ll}
&\nabla(\widetilde{\F}_{l_x}(x_0))-\nabla(\widetilde{\F}_{l_2}(x_0))\\
=&(\widetilde{W}_{L+1}^{(l_x)}-\widetilde{W}_{L+1}^{(l_2)})(J_{L}\widetilde{W}_L)
(J_{L-1}\widetilde{W}_{L-1})\dots(J_{1}\widetilde{W}_1)\\
=&(W_{L+1}^{(l_x)}-W_{L+1}^{(l_2)}+\overline{W}_{L+1}^{(l_x)}-\overline{W}_{L+1}^{(l_2)})(J_{L}(W_L+\overline{W}_L))(J_{L-1}(W_{L-1}+\overline{W}_{L-1}))\dots(J_{1}(W_1+\overline{W}_1)).\\
\end{array}
\end{equation*}
Now, let $\gamma(\overline{W}_1)=(W_{L+1}^{(l_x)}-W_{L+1}^{(l_2)})A_{1}J_{1}\overline{W}_1$, $M_i(\overline{W}_i)=(W_{L+1}^{(l_x)}-W_{L+1}^{(l_2)})A_{i}J_{i}\overline{W}_iB_{i-1}$, where $i\in\{2,3,\dots,L\}$, $M_{L+1}(\overline{W}_{L+1})=(\overline{W}_{L+1}^{(l_x)}-\overline{W}_{L+1}^{(l_2)})B_{L}$.
By Lemma \ref{m2}, we have
\begin{equation*}
\begin{array}{ll}
&\max_{\widetilde{\F}\in\FB}\{||\nabla(\widetilde{\F}_{l_x}(x_0))-\nabla(\widetilde{\F}_{l_2}(x_0))||^2_2\}\\
\ge& \max_{\overline{W}_j\in S_j, j\in[L+1]}\{||\nabla(\F_{l_x}(x_0))-\nabla(\F_{l_2}(x_0))||_2^2+ \sum_{i=1}^{L+1}||M_i(\overline{W}_i)||^2_2\}\\
=&||\nabla(\F_{l_x}(x_0))-\nabla(\F_{l_2}(x_0))||_2^2+ \sum_{i=1}^{L+1}\max_{\overline{W}_i\in S_i}\{||M_i(\overline{W}_i)||^2_2\}.
\end{array}
\end{equation*}

{It is easy to see for any $l\in[L]$, $(W_{L+1}^{(l_x)}-W_{L+1}^{(l_2)})A_{l}=\frac{\nabla \F_{l_x}(t)-\F_{l_2}(t)}{\nabla \F^l(t)}|_{t=x_0}$, so by condition $C_3$, we have $||(W_{L+1}^{(l_x)}-W_{L+1}^{(l_2)})A_{i}||_{-\infty}>c$.}
Let $S^1_l$ be the subset of $S_l$   containing those $C$ which have  at most one nonzero row.
Hence, for $x\in \R^{1\times n}$ and $M\in\R^{n\times n}$, if at most one row of $M$ is nonzero, we have $||xM||_{\infty}=\max_{i,j\in[n]}\{|x_iM_{i,j}|\}\ge||x||_{-\infty}||M||_{\infty}$, where $x_i$ is the $i$-th weight of $x$, $M_{i,j}$ is the weight of $M$ at $i$-th row and $j$-th column.
Thus
\begin{equation*}
\renewcommand{\arraystretch}{1.5}
\begin{array}{ll}
&\max_{\overline{W}_1\in S_1}\{||\gamma(\overline{W}_1)||_2\}\\
=&\max_{\overline{W}_1\in S_1} \{||(W_{L+1}^{(l_x)}-W_{L+1}^{(l_2)})A_{1}J_{1}\overline{W}_1||_2\}\\
\ge&\max_{\overline{W}_1\in S_1} \{||(W_{L+1}^{(l_x)}-W_{L+1}^{(l_2)})A_{1}J_{1}\overline{W}_1||_{\infty}\}\\
\ge&\max_{\overline{W}_1\in S^1_1} \{||(W_{L+1}^{(l_x)}-W_{L+1}^{(l_2)})A_{1}J_{1}\overline{W}_1||_{\infty}\}\\
\ge&||(W_{L+1}^{(l_x)}-W_{L+1}^{(l_2)})A_{1}||_{-\infty}\max_{\overline{W}_1\in S^1_1}\{||J_{1}\overline{W}_1||_{\infty}\}\\
\ge&\gamma  c.
\end{array}
\end{equation*}
Moreover, by condition $C_4$, there exists a column $L_{i-1}$ of $B_{i-1}$ such that $\pi-r\ge\alpha(\F^{i-1}(x_0),L_{i-1})\ge r$, where $\alpha(x,y)$ is the angle between $x,y$.
Therefore, there exists a vector $v_i\in\R^n$ such that $v\bot\F^{i-1}(x)$, $||v_i||_{\infty}=\gamma $ and consider condition  $C_2$, we have $\langle v_i, L_{i-1}\rangle=||v_i||_2||L_{i-1}||_2\cos(\pi/2-r)\ge\sin(r) b  \gamma $.

Then $\max_{\overline{W}_i\in S^1_i}||J_{i}\overline{W}_iB_{i-1}||_{\infty}\ge \sin(r) b  \gamma $,
because there must exist a $\overline{W}_i\in S^1_i$ whose   only nonzero row  is $v_i$ and $J_{i}\overline{W}_i=\overline{W}_i$.
For $l\in\{2,3,\dots,L\}$, {by condition $C_3$,}   we have
\begin{equation*}
\begin{array}{ll}
&\max_{\overline{W}_l\in S_l}\{||M_l(\overline{W}_l)||_2\\
=&\max_{\overline{W}_l\in S_l}\{||(W_{L+1}^{(l_x)}-W_{L+1}^{(l_2)})A_{i}J_{l}\overline{W}_lB_{l-1}||_2\}\\
\ge&\max_{\overline{W}_l\in S_l}\{||(W_{L+1}^{(l_x)}-W_{L+1}^{(l_2)})A_{l}J_{l}\overline{W}_lB_{l-1}||_{\infty}\}\\
\ge&\max_{\overline{W}_l\in S^1_l}\{||(W_{L+1}^{(l_x)}-W_{L+1}^{(l_2)})A_{l}J_{l}\overline{W}_lB_{l-1}||_{\infty}\}\\
\ge&||(W_{L+1}^{(l_x)}-W_{L+1}^{(l_2)})A_{1}||_{-\infty}\max_{\overline{W}_l\in S_l}||J_{l}\overline{W}_lB_{l-1}||_{\infty}\\
\ge& \gamma  \sin(r) c b .
\end{array}
\end{equation*}
Similarly,
\begin{equation*}
\begin{array}{ll}
&\max_{\overline{W}_{L+1}\in S_{L+1}}\{||M_{L+1}(\overline{W}_{L+1})||_2\}\\
=&\max_{\overline{W}_{L+1}\in S_{L+1}} \{||(\overline{W}_{L+1}^{(l_x)}-\overline{W}_{L+1}^{(l_2)})B_{L}||_2\}\\
\ge&\max_{\overline{W}_{L+1}\in S_{L+1}} \{||(\overline{W}_{L+1}^{(l_x)}-\overline{W}_{L+1}^{(l_2)})B_{L}||_{\infty}\}\\
\ge&2\sin(r) b  \gamma .\\
\end{array}
\end{equation*}
Then we obtain the desired lower bound:
\begin{equation*}
\begin{array}{ll}
&\max_{\widetilde{\F}\in\FB}\{||\nabla(\widetilde{\F}_{l_x}(x_0))-\nabla(\widetilde{\F}_{l_2}(x_0))||^2_2\}\\
\ge& \max_{\overline{W}_j\in S_j, j\in[L+1]}\{||\nabla(\F_{l_x}(x_0))-\nabla(\F_{l_2}(x_0))||_2^2+ \sum_{l=1}^{L+1}||M_l(\overline{W}_l)||^2_2\}\\
\ge&||\nabla(\F_{l_x}(x))-\nabla(\F_{l_2}(x))||_2^2+ \gamma ^2((L-1)(\sin(r) c b )^2+c^2+(2\sin(r) b )^2).
\end{array}
\end{equation*}

By condition $C_1$ and the lower bound just obtained, we have
\begin{equation*}
\begin{array}{ll}
&\min_{\widetilde{\F}\in\FB}\{\frac{||\nabla(\F_{l_x}(x_0))-\nabla(\F_{l_2}(x_0))||_2^2}{||\nabla(\widetilde{\F}_{l_x}(x_0))-\nabla(\widetilde{\F}_{l_2}(x_0))||_2^2}\}\\
\le&\frac{||\nabla(\F_{l_x}(x_0))-\nabla(\F_{l_2}(x_0))||_2^2}{||\nabla(\F_{l_x}(x_0))-\nabla(\F_{l_2}(x_0))||_2^2+ \gamma ^2((L-1)(\sin(r) c b )^2+c^2+(2\sin(r) b )^2)}\\
=&1-\frac{\gamma ^2((L-1)(\sin(r) c b )^2+c^2+(2\sin(r) b )^2)}{||\nabla(\F_{l_x}(x_0))-\nabla(\F_{l_2}(x_0))||_2^2+ \gamma ^2((L-1)(\sin(r) c b )^2+c^2+(2\sin(r) b )^2)}\\
\le&1-\frac{\gamma ^2((L-1)(\sin(r) c b )^2+c^2+(2\sin(r) b )^2)}{4A+ \gamma ^2((L-1)(\sin(r) c b )^2+c^2+(2\sin(r) b )^2)}.
\end{array}
\end{equation*}
The theorem is proved.
\end{proof}

We now prove Theorem \ref{th-4}.

\begin{proof}
The proof is similar to that of Theorem \ref{th-3}, so certain details are omitted.
Let $T_l=\{\F^{l-1}(x)\,|\,x\in S\}\subset\R$, and
$S_l$   the set of  $Q\in\R^{n\times n}$ such that $||Q||_{\infty}\le\gamma $ and $Qt=0$ for all $t\in T_{l}$ and $l\in[L+1]$. $S_l$ must contain non-zero elements because of condition $C_4$.

Let $\FB$ be the set of networks $\widetilde{\F}\in\R^n\to\R^m$
$$\widetilde{\F}(x)=\widetilde{W}_{L+1}\sigma(\widetilde{W}_{L}\sigma(\widetilde{W}_{L-1}
\sigma(\dots \sigma(\widetilde{W}_{1}x))))$$
where $\widetilde{W}_l-W_l\in S_l$.
Then for all $\widetilde{\F}\in \FB$, we have $\widetilde{\F}^l(x)=\F^l(x)$ for $l\in[L+1]$ and $x\in S$, so $\FB\subset H(\gamma )$. As a consequence,
\begin{equation*}
\begin{array}{ll}
&\int_{x\sim D_x}\min_{l\ne l_x}\{||\F_{l_x}(x)-\F_l(x)||_2^2I(\F_{l_x}(x)>\F_l(x))\}\d x\\
=&\int_{x\sim D_x}\min_{l\ne l_x}\{||\widetilde{\F}_{l_x}(x)-\widetilde{\F}_l(x)||_2^2I(\widetilde{\F}_{l_x}(x)>\widetilde{\F}_l(x))\}\d x.
\end{array}
\end{equation*}
Let $l_2=\arg\max_{l\ne l_x}\{||\nabla(\F_{l_x}(x))-\nabla(\F_l(x))||_2^2\}$. Then
\begin{equation*}
\begin{array}{ll}
&\int_{x\sim D_x}\max_{l\ne l_x}\{||\nabla(\F_{l_x}(x))-\nabla(\F_i(x))||_2^2\}\d x\\
=&\int_{x\sim D_x}||\nabla(\F_{l_x}(x))-\nabla(\F_{l_2}(x))||_2^2\d x\\
\end{array}
\end{equation*}
and
\begin{equation*}
\begin{array}{ll}
&\int_{x\sim D_x}\max_{l\ne l_x}\{||\nabla(\widetilde{\F}_{l_x}(x))-\nabla(\widetilde{\F}_i(x))||_2^2\}\d x\\
\ge&\int_{x\sim D_x}||\nabla(\widetilde{\F}_{l_x}(x))-\nabla(\widetilde{\F}_{l_2}(x))||_2^2\d x.\\
\end{array}
\end{equation*}
Therefore,
\begin{equation*}
\begin{array}{ll}
&\min_{\widetilde{\F}\in H(\gamma )}\{\frac{R(\widetilde{\F},D_x)}{R(\F,D_x)}\}\\
\le&\min_{\widetilde{\F}\in \FB}\{\frac{R(\widetilde{\F},D_x)}{R(\F,D_x)}\}\\
\le&\min_{\widetilde{\F}\in \FB}\{\frac{\int_{x\sim D_x}||\F_{l_x}(x)-\F_{l_2}(x)||_2^2\d x}{\int_{x\sim D_x}||\widetilde{\F}_{l_x}(x)-\widetilde{\F}_{l_2}(x)||_2^2\d x}\}.
\end{array}
\end{equation*}

We will estimate $\max_{\widetilde{\F}\in \FB}\{\int_{x\sim D_x}||\widetilde{\F}_{l_x}(x)-\widetilde{\F}_{l_2}(x)||_2^2\d x\}$.
Let $J_{l}(x)=\diag(\sign(\F^l(x)))\in\R^{n\times n}$, where $l\in[L]$.
Then  $\frac{\nabla\F_i(x)}{\nabla x}=W^{i}_{L+1}(J_{L}(x)W_L)(J_{L-1}(x)W_{L-1})\dots(J_{1}(x)W_1)$.
Also, for all $\widetilde{\F}\in \FB$,
$$\frac{\nabla \widetilde{\F}_i(x)}{\nabla x}=\widetilde{W}_{L+1}^i(J_{L}(x)\widetilde{W}_L)
(J_{L-1}(x)\widetilde{W}_{L-1})\dots(J_{1}(x)\widetilde{W}_1).$$
Denote $\overline{W}_i=\widetilde{W}_i-W_i\in S_i$. Then
{\small
\begin{equation*}
\renewcommand{\arraystretch}{1.5}
\begin{array}{ll}
&\nabla(\widetilde{\F}_{l_x}(x))-\nabla(\widetilde{\F}_{l_2}(x))\\
=
&(\widetilde{W}_{L+1}^{(l_x)}-\widetilde{W}_{L+1}^{(l_2)})
   (J_{L}(x)\widetilde{W}_L)(J_{L-1}(x)\widetilde{W}_{L-1})\dots(J_{1}(x)\widetilde{W}_1)\\
=&(W_{L+1}^{(l_x)}-W_{L+1}^{(l_2)}+\overline{W}_{L+1}^{(l_x)}-\overline{W}_{L+1}^{(l_2)})(J_{L}(x)(W_L+\overline{W}_L))(J_{L-1}(x)(W_{L-1}+\overline{W}_{L-1}))\dots(J_{1}(x)(W_1+\overline{W}_1))\\
\end{array}
\end{equation*}
}
Let $A_l(x)=\frac{\nabla \F^L(t)}{\nabla \F^l(t)}|_{t=x}$, $B_l(x)=\frac{\nabla \F^l(t)}{\nabla t}|_{t=x}$ where $l\in[L]$. Then

$A_l(x)=(J_{L}(x)W_L)(J_{L-1}(x)W_{L-1})\dots(J_{l+1}(x)W_{l+1})$ and

$B_l(x)=(J_{l}(x)W_i)(J_{l-1}(x)W_{l-1})\dots(J_{1}(x)W_1)$.

\noindent
Let $\gamma(x,\overline{W}_1)=(W_{L+1}^{(l_x)}-W_{L+1}^{(l_2)})A_{1}(x)J_{1}(x)\overline{W}_1$, $M_l(x,\overline{W}_l)=(W_{L+1}^{(l_x)}-W_{L+1}^{(l_2)})A_{l}(x)J_{l}(x)\overline{W}_lB_{l-1}(x)$ where $l\in\{2,3,\dots,L\}$, $M_{L+1}(x,\overline{W}_{L+1})=(\overline{W}_{L+1}^{(l_x)}-\overline{W}_{L+1}^{(l_2)})B_{L}(x)$.
By Lemma \ref{m2-b},
\begin{equation*}
\renewcommand{\arraystretch}{1.5}
\begin{array}{ll}
&\max_{\widetilde{\F}\in \FB}\{\int_{x\sim D_x}||\nabla(\widetilde{\F}_{l_x}(x))-\nabla(\widetilde{\F}_{l_2}(x))||^2_2\d x\}\\
\ge& \max_{\overline{W}_l\in S_l, l\in[L+1]}\{\int_{x\sim D_x}||\nabla(\F_{l_x}(x))-\nabla(\F_{l_2}(x))||_2^2+ \sum_{l=1}^{L+1}||M_l(\overline{W}_l)||^2_2\d x\}\\
=&\int_{x\sim D_x}||\nabla(\F_{l_x}(x))-\nabla(\F_{l_2}(x))||_2^2+ \sum_{l=1}^{L+1}\max_{\overline{W}_l\in S_l}\{\int_{x\sim D_x}||M_l(\overline{W}_l)||^2_2\d x\}.
\end{array}
\end{equation*}
Let $\overline{W}_{1}(k)\in\R^{n\times n}$ be the matrix whose $k$-th row is equal to $k$-th row of $\overline{W}_1$, and other rows are 0.
Let $(J_1(x))^{(k)}$ be the $k$-th row of $J_1(x)$.
Then
\begin{equation*}
\renewcommand{\arraystretch}{1.5}
\begin{array}{ll}
&\max_{\overline{W}_1\in S_1}\{\int_{x\sim D_x}||\gamma(x,\overline{W}_1)||^2_2\}\\
=&\max_{\overline{W}_1\in S_1} \{\int_{x\sim D_x}||(W_{L+1}^{(l_x)}-W_{L+1}^{(l_2)})A_{1}(x)J_{1}(x)\overline{W}_1||^2_2\d x\}\\
\ge&\max_{\overline{W}_1\in S_1} \{\int_{x\sim D_x}||(W_{L+1}^{(l_x)}-W_{L+1}^{(l_2)})A_{1}(x)J_{1}(x)\overline{W}_1||_{\infty}^2\d x\}\\
\ge&\max_{\overline{W}_1\in S_1, k\in [n]}\{\int_{x\sim D_x}(||(W_{L+1}^{(l_x)}-W_{L+1}^{(l_2)})A_{1}(x)||_{-\infty}||J_{1}(x)\overline{W}_1(k)||_{\infty})^2\d x\}\\
\ge&\max_{k\in [n]}\{\int_{x\sim D_x}(||(W_{L+1}^{(l_x)}-W_{L+1}^{(l_2)})A_{1}(x)||_{-\infty}I((J_{1}(x))^{(k)}\ne 0)\gamma )^2\d x\}.\\
\end{array}
\end{equation*}
By condition $C_2$, we know that $P_{x\sim D_x}(||(W_{L+1}^{(l_x)}-W_{L+1}^{(l_2)})A_{1}(x)||_{-\infty}>c_1)>\alpha_1$,
and by condition $C_4$ and the principle of drawer,  there exists a $k\in[n]$ such that
$P(x\sim D_x)(I((J_{1}(x))^{(k)}\ne 0)\ |\ ||(W_{L+1}^{(l_x)}-W_{L+1}^{(l_2)})A_{1}(x)||_{-\infty}>c)>\gamma$.
Thus  there exists a $k\in[n]$ such that
$$P(x\sim D_x)(I((J_{1}(x))^{(k)}\ne 0),||(W_{L+1}^{(l_x)}-W_{L+1}^{(l_2)})A_{1}(x)||_{-\infty}>c)>\gamma\alpha_1.$$
Then
\begin{equation*}
\renewcommand{\arraystretch}{1.5}
\begin{array}{ll}
&\max_{\overline{W}_1\in S_1}\{\int_{x\sim D_x}||\gamma(x,\overline{W}_1)||^2_2\}\\
\ge&\max_{k\in [n]}\{\int_{x\sim D_x}(||(W_{L+1}^{(l_x)}-W_{L+1}^{(l_2)})A_{1}(x)||_{-\infty}I((J_{1}(x))^{(k)}\ne 0)\gamma )^2\d x\}\\
\ge&\max_{k\in[n]}\{P(x\sim D_x)(I((J_{1}(x))^{(k)}\ne 0),||(W_{L+1}^{(l_x)}-W_{L+1}^{(l_2)})A_{1}(x)||_{-\infty}>c)(c\gamma )^2\}\\
\ge&(c_1 \gamma )^2 \gamma\alpha_1.\\
\end{array}
\end{equation*}

Let  $\widetilde{S}_l=\{x\in S_l\,|$ only\ one\ row\ of\ $x$\ is\ not\ $0\}$. Then   for $l\in\{2,3,\dots,L\}$, we have
\begin{equation*}
\renewcommand{\arraystretch}{1.5}
\begin{array}{ll}
&\max_{\overline{W}_l\in S_i}\{\int_{x \sim D_x}||M_l(x,\overline{W}_l)||^2_2\d x\}\\
=&\max_{\overline{W}_l\in S_l}\{\int_{x \sim D_x}||(W_{L+1}^{(l_x)}-W_{L+1}^{(l_2)})A_l(x)J_{l}(x)\overline{W}_lB_{l-1}(x)||^2_2 \d x\}\\
\ge&\max_{\overline{W}_l\in \widetilde{S}_l}\{\int_{x \sim D_x}(||(W_{L+1}^{(l_x)}-W_{L+1}^{(l_2)})A_{l}(x)||_{-\infty}||J_{l}(x)\overline{W}_lB_{l-1}(x)||_{\infty})^2\d x\}\\
\ge&\max_{\overline{W}_l\in \widetilde{S}_l}\{P_{x\sim D_x}(||(W_{L+1}^{(l_x)}-W_{L+1}^{(l_2)})A_{l}(x)||_{-\infty}>c_l,\ ||J_{l}(x)\overline{W}_lB_{l-1}(x)||_{\infty}\ge d_l||J_{l}(x)\overline{W}_l||_{\infty},\\ &J_{l}(x)\overline{W}_l\ne0 )
(\gamma  c_l d_{l-1})^2\}\\
\end{array}
\end{equation*}

By conditions $C_3$ and $C_2$, we have
$P_{x\sim D_x}(||(W_{L+1}^{(l_x)}-W_{L+1}^{(l_2)})A_{l}(x)||_{-\infty}>c_l,\ ||J_{l}(x)\overline{W}_lB_{l-1}(x)||_{\infty}$ $\ge d_l||J_{l}(x)\overline{W}_l||_{\infty})>\alpha_l+\beta_{l-1}-1$.
By condition $C_4$ and the principle of drawer,   there exists a $\overline{W}_i\in\widetilde{S}_i$ such that
 $P_{x\sim D_x}(J_{l}(x)\overline{W}_l\ne0\ |\ ||(W_{L+1}^{(l_x)}-W_{L+1}^{(l_2)})A_{l}(x)||_{-\infty}>c_l,\ ||J_{l}(x)\overline{W}_lB_{l-1}(x)||_{\infty}\ge d_l||J_{l}(x)\overline{W}_i||_{\infty})>\gamma$. So,  there exists a $\overline{W}_l\in\widetilde{S}_l$ such that
\begin{equation*}
\renewcommand{\arraystretch}{1.5}
\begin{array}{ll}
P_{x\sim D_x}(||(W_{L+1}^{(l_x)}-W_{L+1}^{(l_2)})A_{l}(x)||_{-\infty}>c_l, ||J_{l}(x)\overline{W}_lB_{l-1}(x)||_{\infty}\ge d_l||J_{l}(x)\overline{W}_l||_{\infty}, J_{l}(x)\overline{W}_l\ne0 )\\
>\gamma (\alpha_l+\beta_{l-1}-1).
\end{array}
\end{equation*}
Then
\begin{equation*}
\renewcommand{\arraystretch}{1.5}
\begin{array}{ll}
&\max_{\overline{W}_i\in S_i}\{\int_{x \sim D_x}||M_i(x,\overline{W}_i)||^2_2\d x\}\\
\ge&\max_{\overline{W}_i\in \widetilde{S}_i}\{P_{x\sim D_x}(||(W_{L+1}^{(l_x)}-W_{L+1}^{(l_2)})A_{i}(x)||_{-\infty}>c_i,\\ & ||J_{i}(x)\overline{W}_iB_{i-1}(x)||_{\infty}\ge d_i||J_{i}(x)\overline{W}_i||_{\infty},\ J_{i}(x)\overline{W}_i\ne0 )
(\gamma  c_i d_{i-1})^2\}\\
\ge& (\gamma  c_i d_{i-1})^2 \gamma (\alpha_i+\beta_{i-1}-1).
\end{array}
\end{equation*}

Similarly, by condition $C_3$, we have
{\small
\begin{equation*}
\renewcommand{\arraystretch}{1.5}
\begin{array}{ll}
&\max_{\overline{W}_{L+1}\in S_{L+1}}\{\int_{x \sim D_x}||M_{L+1}(x,\overline{W}_{L+1})||^2_2\d x\}\\
=&\max_{\overline{W}_{L+1}\in S_{L+1}}\{\int_{x \sim D_x}||(\overline{W}_{L+1}^{(l_x)}-\overline{W}_{L+1}^{(l_2)})B_{L}(x)||^2_2\d x\}\\
\ge&\max_{\overline{W}_{L+1}\in S_{L+1}}\{\int_{x \sim D_x}||(\overline{W}_{L+1}^{(l_x)}-\overline{W}_{L+1}^{(l_2)})B_{L}(x)||_{\infty}^2\d x\}\\
\ge&\max_{\overline{W}_{L+1}\in S_{L+1}}\{\int_{x \sim D_x}I(||(\overline{W}_{L+1}^{(l_x)}-\overline{W}_{L+1}^{(l_2)})B_{L}(x)||_{\infty}\ge d_{L}||\overline{W}_{L+1}^{(l_x)}-\overline{W}_{L+1}^{(l_2)}||_{\infty})(d_{L}||\overline{W}_{L+1}^{(l_x)}-\overline{W}_{L+1}^{(l_2)}||_{\infty})^2\d x\}\\
\ge&\beta_L(d_L\gamma )^2.
\end{array}
\end{equation*}
}
Then we have the desired lower bound
\begin{equation*}
\renewcommand{\arraystretch}{1.5}
\begin{array}{ll}
&\max_{\widetilde{\F}\in \FB}\{\int_{x\sim D_x}||\nabla(\widetilde{\F}_{l_x}(x))-\nabla(\widetilde{\F}_{l_2}(x))||^2_2\d x\}\\
\ge&\int_{x\sim D_x}||\nabla(\F_{l_x}(x))-\nabla(\F_{l_2}(x))||_2^2\d x+ \sum_{i=1}^{L+1}\max_{\overline{W}_i\in S_i}\{\int_{x\sim D_x}||M_i(\overline{W}_i)||^2_2\d x\}\\
\ge&\int_{x\sim D_x}||\nabla(\F_{l_x}(x))-\nabla(\F_{l_2}(x))||_2^2\d x+(\gamma  c)^2 \alpha_1 \gamma+\sum_{i=2}^{L}(\gamma  c_i d_{i-1})^2 \gamma (\alpha_i+\beta_{i-1}-1)+\beta_L(d_L\gamma )^2.\\
\end{array}
\end{equation*}

By condition $C_1$ and the lower bound just obtained, we have
\begin{equation*}
\renewcommand{\arraystretch}{1.5}
\begin{array}{ll}
&\min_{\widetilde{\F}\in \FB}\{\frac{\int_{x\sim D_x}||\F_{l_x}(x)-\F_{l_2}(x)||_2^2\d x}{\int_{x\sim D_x}||\widetilde{\F}_{l_x}(x)-\widetilde{\F}_{l_2}(x)||_2^2\d x}\}\\
\le&\frac{\int_{x\sim D_x}||\F_{l_x}(x)-\F_{l_2}(x)||_2^2\d x}{\int_{x\sim D_x}||\nabla(\F_{l_x}(x))-\nabla(\F_{l_2}(x))||_2^2\d x+(\gamma  c_1)^2 \alpha_1 \gamma_1+\sum_{i=2}^{L}(\gamma  c_i d_{i-1})^2 \gamma_i (\alpha_i+\beta_{i-1}-1)+\beta_L(d_L\gamma )^2}\\
=&1-\frac{(\gamma  c_1)^2 \alpha_1\gamma_1+\sum_{i=2}^{L}(\gamma  c_i d_{i-1})^2 \gamma_i (\alpha_i+\beta_{i-1}-1)+\beta_L(d_L\gamma )^2}{\int_{x\sim D_x}||\nabla(\F_{l_x}(x))-\nabla(\F_{l_2}(x))||_2^2\d x+ (\gamma  c_1)^2 \alpha_1 \gamma_1+\sum_{i=2}^{L}(\gamma  c_i d_{i-1})^2 \gamma_i (\alpha_i+\beta_{i-1}-1)+\beta_L(d_L\gamma )^2}\\
\le&1-\frac{(\gamma  c_1)^2 \alpha_1 \gamma_1+\sum_{i=2}^{L}(\gamma  c_i d_{i-1})^2 \gamma_i (\alpha_i+\beta_{i-1}-1)+\beta_L(d_L\gamma )^2}{4A+(\gamma  c_1)^2 \alpha_1 \gamma_1+\sum_{i=2}^{L}(\gamma  c_i d_{i-1})^2 \gamma_i (\alpha_i+\beta_{i-1}-1)+\beta_L(d_L\gamma )^2}.
\end{array}
\end{equation*}

The theorem is proved.
\end{proof}

\section{Conclusion}

The adversarial parameter attack for DNNs is proposed.
In the attack, the adversary makes small changes to the
parameters of a trained DNN such that the attacked DNN
will keep the accuracy of the original DNN as much as possible,
but makes the robustness as low as possible.
The goal of the attack is that the attacked DNN is
imperceptible to the user and at the same time the robustness of the DNN is broken.
The existence of adversarial parameters is proved under certain conditions
and effective adversarial parameter attack algorithms are also given.
%

In general, it is still out of reach to provide provable safety DNNs in real-world applications, and one of the ways to develop safer DNN models and training methods,
and evaluate the safety of the trained model against existing attack methods.
In other words, a DNN to be deployed is considered safe
if it is safe against existing attacks in certain sense.
From this viewpoint, it is valuable to have more attack methods.
This  is similar to the cryptanalysis~\cite{it},
where much more matured theory and attack methods are developed.

\end{document}